\newcommand{\N}{\mathbb{N}}
\newcommand{\C}{\mathcal{C}}
\begin{document}

\title{Normal forms in Virus Machines\thanks{Supported by the Zhejiang Lab BioBit Program (Grant No. 2022BCF05).}}

\author{Antonio Ram\'irez-de-Arellano\thanks{Corresponding author: Marie Sklodowska-Curie Actions grant 101206770 - VM-AI.}\\
    Research Institute for Artificial Intelligence\\
    "Mihai Draganescu", Bucharest 040711, Romania\\
    SCORE Laboratory, I3US, University of Seville\\
    antonio@racai.ro aramirezdearellano@us.es
    \and Francis George C. Cabarle\thanks{Supported by Consejer{\'i}a de Transformaci{\'o}n Econ{\'o}mica, Industria, Conocimiento y Universidades, de la Junta de Andaluc{\'i}a, in the framework of the project ``Smart Computer systems Research and Engineering (SCORE)'' corresponding to Programa de Fortalecimiento de Institutos Universitarios de Investigaci{\'o}n de las Universidades Andaluzas, Centros e Infraestructuras para la adquisici{\'o}n del sello ``Severo Ochoa'' o ``María de Maeztu''.}\\
    SCORE Laboratory, I3US, University of Seville\\
    Dept. of Computer Science\\
    University of the Philippines Diliman\\
    fcabarle@us.es fccabarle@up.edu.ph
    \and
    David Orellana-Mart\'in\\
    Dept. of Computer Science and Artificial Intelligence\\
    University of Seville\\
    SCORE Laboratory, I3US, University of Seville\\
    dorellana@us.es
    \and
    Mario J. P\'erez-Jim\'enez\\
    Dept. of Computer Science and Artificial Intelligence\\
    University of Seville\\
    SCORE Laboratory, I3US, University of Seville\\
    marper@us.es
}

\maketitle

\runninghead{A. Ram\'irez-de-Arellano, F. G. C. Cabarle, D. Orellana-Mart\'in, M. J. P\'erez-Jim\'enez}{Normal Forms in VM}

\abstract{
In the present work, we further study the computational power of virus machines (VMs in short).
VMs provide a computing paradigm inspired by the transmission and replication networks of viruses.
VMs consist of process units (called hosts) structured by a directed graph whose arcs are called channels and an instruction graph that controls the transmissions of virus objects among hosts.
The present work complements our understanding of the computing power of VMs by introducing normal forms; these expressions restrict the features in a given computing model.
Some of the features that we restrict in our normal forms include (a) the number of hosts, (b) the number of instructions, and (c) the number of virus objects in each host.
After we recall some known results on the computing power of VMs we give our series of normal forms, such as the size of the loops in the network, proving new characterisations for families of number sets, such as finite sets, semilinear sets, or recursively enumerable sets ($NRE$). 
}

\keywords{
Virus machines, Topology, Natural computing, Normal forms, Unconventional computing}


\maketitle

\section{Introduction}

In the present work, we consider some normal forms for virus machines~\cite{VM} (VMs), this computing paradigm develops unconventional and natural computing models inspired by networks of virus replications and transmissions.
More information on unconventional and natural computing is found in \cite{unconcomp_handb2021} and \cite{naco_handbook2012}, respectively.
From \cite{VM,GCRVM} it is shown that VMs are {\it Turing complete}, that is, they are algorithms capable of general-purpose computations.
From such works some VMs for computing classes of (in)finite sets of numbers are also shown.
Providing normal forms for VMs allows a more refined or deeper view of their computations: What features and their values for VMs can be increased or decreased to increase or decrease computing power?

Virus machines consists of three {\it graphs}: a directed and weighted {\it host graph} with nodes and edges referred to as {\it hosts} and {\it channels}, respectively;
a directed and weighted {\it instruction graph} where nodes are {\it instructions} and edge weights determine which instruction to prioritise and next activate;
an {\it instruction-channel} graph which connects an edge between instructions and channels in the previous graphs.
Hosts contain zero or more virus objects, and activating an instruction means opening a channel since the channels are closed by default.
Opening a channel means that virus objects from one host are replicated and transferred to another host.
Even from this high-level view of VMs and their features, it can be evident that the source of their computing power can be further restricted or refined.
Thus, a deeper understanding of VMs as algorithms is gained in order to aid in developing applications.

Briefly, the idea of a normal form for some computing mo\-del is to consider restrictions in the model while maintaining its computer power.
That is, considering lower bounds for ingredients in a computing model is a natural direction for investigation.
For instance, a well-known normal form in language theory is the Chomsky normal form, CNF in short, from \cite{chomsky_formgram1959}.
Instead of having an infinite number of forms to write rules in a grammar for context-free sets, CNF shows that two forms are enough.
Normal forms in unconventional or bio-inspired models include spiking neural P systems \cite{ibarra_snpnorm2007} and cellular automata \cite{brownca2010}, with recent and optimal results in \cite{ivan_snpvarnorm2022,peper_normbca2012}, and a recent survey in \cite{francis_acmc2022jour}.
In addition to restriction while maintaining the same power, normal forms can provide {\it frontiers}.
For instance, by giving some lower bound for some value, further decreasing the value can mean that we only compute a proper subset of problems as before.

This paper contributes the following to the study of virus machines and their computing power.
Normal forms, some of which are optimal bounds, for VMs are provided in the following sense: (a) providing characterisations (previously were inclusions) for generating families of finite sets;
(b) showing new characterisations for finite sets of numbers using restric\-tions on the number of required hosts, instructions, or viruses; and (c)
new characterisations are also given for singleton sets of numbers and some linear progressions or combinations.
The previous work was focused on the restriction of only three ingredients: hosts, instructions, and number of viruses. In this work, new restrictions are considered: limiting or not the host or ins\-truction graphs to be a tree graph, that is, an acyclic directed graph. Moreover, the instruction-channel graph can also be li\-mited in the sense that one channel can be attached to at most one instruction.
We show, for instance, that some VMs with a tree instruction or host graph and with some lower bounds on the number of hosts, instructions, and viruses can only compute finite sets. Lastly, we highlight a new (and better) characteri\-sation of semilinear sets of numbers, in short $SLIN$, with virus machines.
Our results on normal forms are then used to ask new questions regarding other normal forms and restrictions on VMs.

The present work is a much extended and revised version of the preliminary report in~\cite{tree_or_not_bwmc2024}.
For example, findings concerning the family of sets $SLIN$ are particularly noteworthy from both a theoretical and an applied perspective:
$SLIN$ is a  class enjoying the fact that it is above the family of finite sets $NFIN$ and below the family of Turing computable sets $NRE,$ with $SLIN$ known to be decidable \cite{ginsspan1966}; 
the decidability of $SLIN$ helps toward the computational complexity of machines for it;
applications can benefit from the decidability of $SLIN$, including formal verification, proof assistant \cite{nipkow2008linear}.

The organisation of the present work is as follows. First, some brief definitions and the state of the art are presented in Section~\ref{sec:def}. After that, novel results are presented with the old ingredients in Section~\ref{sec:oldIng}. We continue with novel results with the new ingredients proposed in Section~\ref{sec:newIng}. Lastly, some conclusions with open remarks are shown in Section~\ref{sec:conc}.

\section{Definitions}\label{sec:def} 

\subsection{Virus Machines}\label{def:vm}
\begin{definition}\label{def:VM}
    Let a virus machine $\Pi$ of degree $(p,q)$ with $p,q \geq 0$ the number of hosts and instructions respectively, defined as:

    $$\Pi = (\Gamma, H, I, D_H, D_I, G_C, n_1,\dots,n_p,i_1,h_{out})$$

    where:

    \begin{itemize}
        \item $\Gamma=\{v\}$ is the singleton alphabet.
        \item $H=\{h_1,\dots,h_p\}$ is the ordered set of hosts, $h_{out}$ can be either in $H$ or not (for this work, we will suppose always $h_{out}= h_0\notin H$, $I=\{i_1,\dots, i_q\}$ the ordered set of instructions).
        \item $D_H = (H\cup \{h_{out}\}, E_H, w_H)$ is the weighted and directed (WD) \textit{host graph}, where the edges are \textit{called channels} and $w_H : E_H\cup\{h_{out}\} \rightarrow \N$.
        \item $D_I = (I, E_I, w_I)$ is the WD \textit{instruction graph} and $w_I : E_I\rightarrow \{1,2\}$.
        \item $G_C = (E_H \cup I, E_I)$ is an unweighted bipartite graph ca\-lled \textit{channel-instruction} graph, where the disjoint partition associated is $\{E_H\cup I\}$, although a channel can be attached to more than one instruction, each instruction is attached to only one channel at most.
        \item $n_1,\dots,n_p\in \N$ are the initial number of viruses in each host $h_1,\dots, h_p$, respectively.
    \end{itemize}
\end{definition}

Regarding the \textbf{semantics}, a \textit{configuration} at an instant $t\geq0$ is the tuple $\C_t = (a_{1,t},a_{2,t},\ldots,a_{p,t},u_t,a_{0,t})$
where for each $j\in\{1,\dots, p\}$, $a_{j,t}\in\N$ represents the number of viruses in the host $h_j$ at instant $t$, and $u_t\in I\cup\{\#\}$ is the following activated instruction, unless $u_t= \#$ that is a halting configuration. Lastly, $\C_0=(n_1,\dots,n_p,i_1,0)$ is the \textit{initial configuration}. 

From a configuration $\C_t$, $\C_{t+1}$ is obtained as follows. The instruction that will be activated is $u_t$ if $u_t\in I$, otherwise $\C_t$ is a \textit{halting configuration}. Let us suppose that $u_t\in I$ and that it is attached to the channel $(h_j,h_{j'})\in E_H$ with weight $w_H((h_j,h_{j'}))=w\in\N$, then the channel is \textit{opened} and  two possibilities holds:
\begin{itemize}
    \item If $a_{j,t}>0$, then there is \textit{virus transmission}, that is, one virus is consumed from $h_j$ and is sent to the host $h_{j'}$ replicated by $w$. The next activated instruction will follow the highest weight path from $u_t$ to just the next instruction on the instruction graph. In case the highest path is not unique, it is chosen nondeterministically. In case there is no possible path, then $u_{t+1} = \#$
    \item If $a_{j,t}=0$, then there is no virus transmission and the next instruction follows the least weight path from $u_t$ to just the next instruction. For the other cases, it is analogous to the previous assumption.
\end{itemize}

This paper is focused on the computational power of VMs in the generating mode, that is the numbers that can be generated in all the possible computations, and for that we fix the same notation as in the foundational paper~\cite{computing-VM} which will be called {\it old ingredients}. Let
$\mathrm{NVM}(p,q,n)$ be the family of sets of natural numbers generated by virus machines with at most $p$ hosts, $q$ instructions, and $n$ viruses in each host at any instant of the computation. For unbounded restrictions, they are replaced by $*$.

\subsubsection{Explicit Example}

To clarify the behavior of these devices, the generating mode, and their visual representation, let us see an explicit example. Let $\Pi_{ex}$ be the VM of degree $(2,4)$ in the generating mode and visually described in Figure~\ref{fig:VM-ex}. The squares represent the host units with $h_0$ being the environment, and the double arrows the arcs of the host graph. For simplicity, if the weight is $1$, then the number is omitted in the representation. The blue dots are the instructions units and the vertices in the instruction graph. Lastly, the red dashed lines are the edges of the instruction-channel graph.

\begin{figure}[ht]
    \centering
    \includegraphics[width=0.4\textwidth]{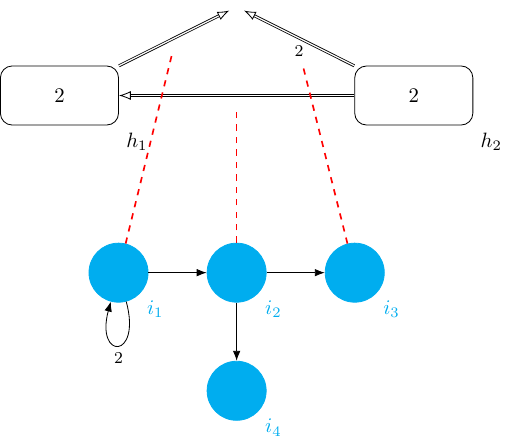}
    \caption{The virus machine $\Pi_{ex}$.}
    \label{fig:VM-ex}
\end{figure}

The computation works as follows; the initial configuration is $\C_0=(2,2,i_1,0)$. The instruction $i_1$ is connected to the channel $(h_1,h_0)$, as the host $h_1$ is not empty, then there is virus transmission and the highest weight path is followed, that is, it goes again to $i_1$ with configuration $\C_1=(1,2,i_1,1)$. The process is analogous and the configuration $\C_2=(0,2,i_1,2)$ is reached. From here, since the host $h_1$ is empty, there is no virus transmission and the least weight path is followed, having the configuration $\C_3=(0,2,i_2,2)$. Instruction $i_2$ opens the channel $(h_2,h_1)$ so there is virus transmission, but now the highest weight path is not unique, having two possible configurations depending on the instruction that is chosen:
\begin{itemize}
    \item If instruction $i_3$ is chosen, then the configuration is $\C_4=(1,1,i_3,2)$. That instruction is connected to the channel $(h_2,h_0)$ with weight $2$, thus the virus sent from host $h_2$ is replicated by $2$ and sent to the output region. As there is no possible path from the instruction graph, a halting configuration is reached $\C_5=(1,1,\#,4)$. This means that the number $4$ has been generated.
    \item  If instruction $i_4$ is chosen, then the configuration is $\C_4=(1,1,i_4,2)$. As that instruction is not attached to any channel and there is no possible path in the instruction graph, a halting configuration is reached $\C_5=(1,1,\#,2)$. This means that the number $2$ has been generated.
\end{itemize}

With all the possible computations checked, we can state that the VM $\Pi_{ex}$ generates the set $\{2,4\}$.

\subsubsection{Formal Verification}

One technique to mathematically verify the integrity of these devices is by designing invariant formulas which highlight interesting properties of the most relevant loops. For simplicity, some formal verifications will show the invariant formula and how it works; the mathematical proof of these formulas can be easily obtained with the induction technique. Lastly, we propose a novel technique for formal verification, that is, by fixing the sizes of the loops in the instruction graph. 

\subsection{Topology}
For this study, several notation and clarifications related to discrete topology must first be presented.

\begin{definition}
    A path in a directed graph $G = (V,E)$ is an ordered tuple\\
    $(v_1,\dots,v_n)$ of vertices such that $(v_i,v_{i+1}) \in E$ for $i=1,\dots, n-1$ and $v_i\neq v_j,$ for $i,j= 1,\dots,n,$ and $i\neq j$. Under the same conditions, if $(v_n,v_1)\in E$, then the path is called a cycle. A graph without cycles is called a tree. The \textit{depth} of a tree is the longest path of the tree.

    We say that $v_1$ is connected or attached to $v_n$ if there is a path $w= (e_1,\dots,e_{n-1})$, whose sequence of vertices is $(v_1,\dots,v_n)$. We denote by $V(v_i)\subseteq V$ the subset of vertices that are connected by a path from $v_i$.

    A graph $G= (V,E)$ is connected if there are paths that contain each pair of vertices $v_i,v_j$ with $i\neq j$. A rooted tree is a graph with a distinguished 
    node $v_i$, called \textit{the root}, such that for each $v_j\in V,$ with $i\neq j$, $v_i$ is connected to $v_j$. We will note as $I(v_i)\subseteq V$ the subset of vertices that forms the rooted tree $G_{v_i} = (I(v_i),E(v_i))$ which is a subgraph of $G$.

\end{definition}

\begin{proposition}[Invariance]
    If the instruction graph $D_I$ of a virus machine $\Pi$ of degree $(p,q)$, with $p,q\geq 1$,
    $$\Pi = (H,I,D_H,D_I,G_C,n_1,\dots,n_p,i_1,h_{out}),$$
    is not a rooted tree with root $i_1$, then there exists another virus machine $\Pi'$ of degree $(p,q')$, with $q'\leq q$, which has the same computation.
\end{proposition}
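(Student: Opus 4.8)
The plan is to exploit the fact that a computation of a virus machine is just a walk through the instruction graph $D_I$ that starts at $i_1$: the first activated instruction is $i_1$, and each $u_{t+1}$ is obtained by following an out-edge of $u_t$. Consequently, any instruction that $i_1$ cannot reach by a directed path is never activated in any computation and may simply be discarded. Concretely, I would let $R\subseteq I$ be the set consisting of $i_1$ together with every instruction reachable from $i_1$ by a directed path in $D_I$, and build $\Pi'$ from $\Pi$ by keeping only the instructions in $R$: take $D_I'$ to be the subgraph of $D_I$ induced by $R$, restrict $G_C$ to those edges whose instruction endpoint lies in $R$, and leave $\Gamma$, $H$, $D_H$, the initial contents $n_1,\dots,n_p$, the initial instruction $i_1$, and $h_{out}$ untouched. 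Since $i_1\in R$, this yields a well-formed virus machine of degree $(p,q')$ with $q'=|R|\leq q$.

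First I would check that the out-structure of every retained instruction is preserved exactly. If $i\in R$ and $(i,i'')\in E_I$, then $i''$ is also reachable from $i_1$ (prepend a path from $i_1$ to $i$), so $i''\in R$ and the edge $(i,i'')$, with its weight, survives in $D_I'$. Hence for each $i\in R$ the full out-neighbourhood and all its weights coincide in $D_I$ and $D_I'$; in particular the highest-weight and least-weight successor choices, the nondeterministic selection among ties, and the ``no outgoing path'' condition that produces $u_{t+1}=\#$ are identical in both machines. Because each instruction is attached to at most one channel and $D_H$ is unchanged, the channel opened by any $i\in R$ and its replication weight are also identical.

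Next I would argue by induction on $t$ that $\Pi$ and $\Pi'$ admit exactly the same computations. The initial configurations agree, namely $\C_0=(n_1,\dots,n_p,i_1,0)$ with $i_1\in R$. Assuming a configuration $\C_t$ reached with $u_t\in R\cup\{\#\}$, the transition depends only on the host contents (governed by $D_H$ and $G_C$, both unchanged) and on the out-structure of $u_t$ in the instruction graph, which by the previous paragraph is the same in both machines; therefore $\C_{t+1}$ and the set of available nondeterministic branches are identical, and $u_{t+1}$ again lies in $R\cup\{\#\}$. Thus the two machines realise the same halting configurations and generate the same set, so $\Pi'$ has the same computation as $\Pi$.

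Finally I would reconcile this with the hypothesis: if $D_I$ is not a rooted tree with root $i_1$, then either some instruction is unreachable from $i_1$, in which case $|R|<q$ and the construction is a genuine reduction, or the failure stems solely from a cycle among reachable instructions, in which case $R=I$ and one simply takes $\Pi'=\Pi$; in both situations $q'\leq q$, as claimed. The step I expect to require the most care is the preservation-of-transitions argument: one must make sure that deleting the unreachable instructions never removes an out-edge of a retained instruction nor perturbs a weight-based or nondeterministic choice. This is precisely what the observation ``successors of reachable nodes are reachable'' guarantees, and it is the crux on which the whole invariance rests.
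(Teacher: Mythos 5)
Your proposal is correct and follows essentially the same route as the paper's own proof: restrict the machine to the set of instructions reachable from $i_1$ (the paper's $I(i_1)$, your $R$) and observe that, by the semantics, no unreachable instruction can ever be activated, so the restricted machine has the same computations. Your write-up is in fact more careful than the paper's, since you verify explicitly that out-neighbourhoods of reachable instructions are preserved and you handle the degenerate case where all instructions are reachable but a cycle prevents $D_I$ from being a tree (where one takes $\Pi'=\Pi$), a case the paper's argument glosses over.
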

\begin{proof}
    Let $\Pi$ be the virus machine fixed in the statement, setting the instruction graph to $D_I=(I,E_I,w_I)$, as it is not a rooted tree with root $i_1$; then $I(i_1)\neq I$. Let $\Pi'$ be the virus machine of degree $(p,q')$ with  $q'=|I(i_1)|)$, defined as $\Pi$ but with a new instruction graph $D_{I(i_1)}= (I(i_1),E_{I(i_1)},w_{I(i_1)})$.

    Due to the semantics associated with virus machines, any instruction that can be activated must be connected by a path from the initial instruction; thus, the set of instructions of $\Pi$ that can be activated at some instant of the computation is contained in $I(i_1)$, therefore $\Pi'$ has the same computation.
\end{proof}

Using this result, from now on, all defined virus machines are supposed to be rooted trees with root $i_1$, which is $i_1$ the initial instruction. In addition, the same notation of the components of a virus machine $\Pi$ is used for the following results. 

\subsection{State-of-the-art}
This subsection is devoted to reviewing results prior to this work on the computing power of VMs with respect to certain classes or families of computable numbers.

\begin{figure}[ht]
    \centering
    \includegraphics[width=0.6\textwidth]{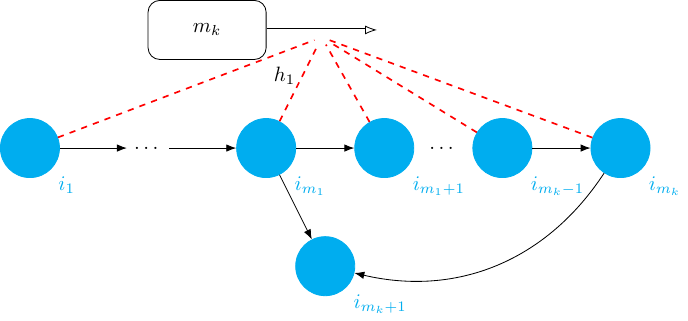}
    \caption{A virus machine generating $\mathrm{NFIN}$ for $\mathrm{NVM}(1,*,*)$.}
    \label{fig:NFIN-host}
\end{figure}

\begin{figure}[ht]
    \centering
    \includegraphics[width=0.6\textwidth]{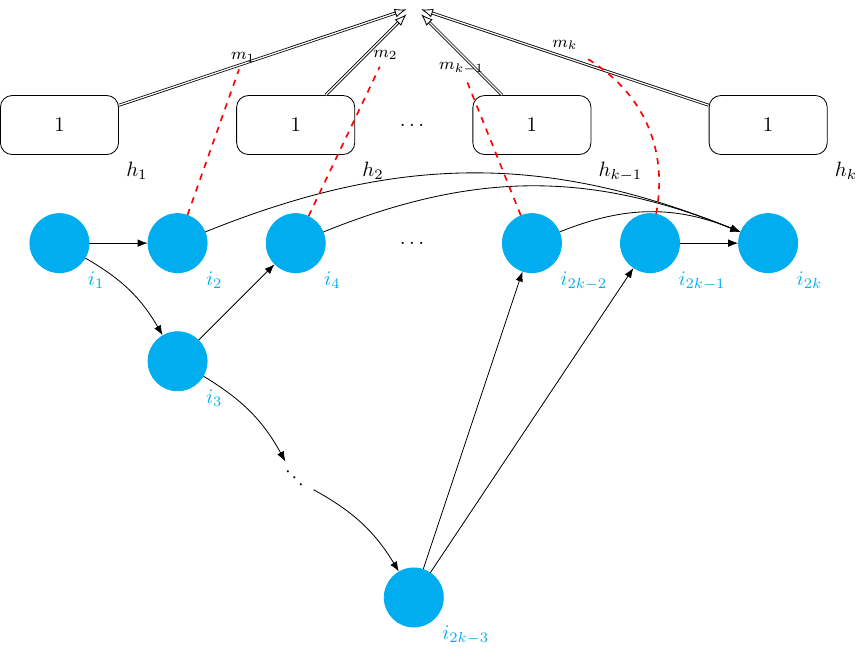}
    \caption{A virus machine generating $\mathrm{NFIN}$ for $\mathrm{NVM}(*,*,1)$.}
    \label{fig:NFIN-numhost}
\end{figure}

The state-of-the-art is presented in Table~\ref{tab:tab0}. The virus machines in the generating mode are Turing Universal; that is, they can generate recursively enumerable sets of numbers ($NRE$)~\cite{VM} for unbounded restrictions. This power is severely reduced when the last ingredient is reduced; more precisely, a semilinear set characterisation ($SLIN$) is proved for $\mathrm{NVM}(*,*,2)$~\cite{computing-VM}. From now on, not characterisations but inclusions have been proven, for finite sets ($\mathrm{NFIN}$) they are contained in $\mathrm{NVM}(1,*,*)$ and $\mathrm{NVM}(*,*,1)$~\cite{GCRVM}. Finally, the set of power of two numbers is contained in $\mathrm{NVM}(2,7,*)$~\cite{GCRVM}.

\begin{table}[ht]
    \centering
    \begin{tabular}{|c|c|c|c|c|}
        \hline
        Family of sets & Relation & Hosts & Instructions & Viruses \\\hline
        $NRE$~\cite{VM} & = & * & * & * \\\hline
        $SLIN$~\cite{computing-VM} & = & * & * & 2 \\\hline
        $\mathrm{NFIN}$~\cite{GCRVM} & $\subseteq$ & 1 & * & * \\
        & $\subseteq$ & * & * & 1 \\\hline
        $\{2^n \mid n\geq 0\}$~\cite{GCRVM} &$\subseteq$& 2 & 7 & * \\\hline
    \end{tabular}
    \caption{Previous results: Minimum resources needed to generate family subsets of natural numbers.}
    \label{tab:tab0}
\end{table}

An interesting and natural question is can we further restrict or provide better lower bounds, for known results about VMs?
That is, provide ``better'' characterisations of finite sets or even other families of sets such as the singleton sets, see, for instance, Table \ref{tab:tab0}.
As we focus on finite sets later, let us see the VMs used in~\cite{GCRVM} to generate finite sets. For $\mathrm{NVM}(1,*,*)$ the VM presented in Figure~\ref{fig:NFIN-host}, and for $\mathrm{NVM}(*,*,1)$ the Figure~\ref{fig:NFIN-numhost}. The corresponding lemmas were called (viruses) and (hosts), respectively, and we follow the same notation in this work.

\section{Novel results with old ingredients}\label{sec:oldIng}

Having set the state-of-the-art of VM in generating mode, let us see new results by the boundary of those old ingredients: the hosts, the instructions, and the viruses during the computation (see Section \ref{def:vm}). This section will start with finite sets, just to complete the previous results, and then it is organised from weak to stronger power (in the sense of bigger sets), in the end a table summarising all the novel results is presented.

\subsection{Finite sets}
\begin{lemma}[Viruses-host]\label{lem:2*2}
        Let $F = \{m_1,\dots,m_k\}$ a nonempty finite set of natural numbers with $m_i>0$. Then $F$ can be generated by a virus machine of $2$ hosts, $2k+1$ instructions, and $2$ virus in each host at most.
\end{lemma}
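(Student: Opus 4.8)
The plan is to exhibit an explicit virus machine realising $F$ and then certify it with branch invariants. I would use a single \emph{productive} host $h_1$ that carries at most one virus at a time (so the bound $2$ is met with room to spare), an auxiliary host $h_2$ kept empty, and the environment $h_0$ as the output region. The crucial observation is that channel weights are arbitrary natural numbers and that channels do \emph{not} count towards the degree $(p,q)$ (which counts only hosts and instructions); hence I may install, for every $i\in\{1,\dots,k\}$, a channel $(h_1,h_0)$ of weight $m_i$. Firing the instruction attached to such a channel exactly once, while $h_1$ is nonempty, deposits precisely $m_i$ viruses in $h_0$ and empties $h_1$, after which the computation can only halt. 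Thus each target value is encoded in a \emph{weight} rather than in a virus count, which is exactly what keeps the number of viruses per host bounded by $2$ independently of $\max_i m_i$.

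For the control flow I would build the instruction graph as a caterpillar that is a rooted tree with root $i_1$, consistent with the standing assumption coming from the Invariance proposition. Concretely, take a root/initialisation instruction $i_1$, a spine of $k$ \emph{decision} instructions $d_1,\dots,d_k$, and $k$ \emph{output} instructions $o_1,\dots,o_k$, for a total of $1+2k=2k+1$ instructions. Each $d_i$ is attached to a channel whose source is the empty host $h_2$, so that no transmission occurs and the \emph{least}-weight rule governs the next step; by giving $d_i$ two outgoing edges of equal (hence jointly least) weight, one to $o_i$ and one to $d_{i+1}$, the semantics forces a genuine nondeterministic choice between \emph{committing} to the value $m_i$ and \emph{postponing} to the next one. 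The last decision $d_k$ retains only the edge to $o_k$, so some branch is always eventually taken. Each $o_i$ is the instruction attached to the weight-$m_i$ channel described above and is a leaf of the tree, so the configuration reached after it fires is halting.

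Finally I would verify correctness by the invariant technique announced in the preliminaries. The key invariants are: (i) along the spine no virus ever leaves $h_1$, so $h_1$ still holds its trigger virus whenever some $o_i$ is reached; (ii) each maximal computation fires exactly one $o_i$ and therefore sends exactly $m_i$ to $h_0$; and (iii) both hosts stay within the bound $2$ throughout (indeed within $1$). Ranging over the nondeterministic choices at $d_1,\dots,d_k$ then shows that the set of generated numbers is exactly $\{m_1,\dots,m_k\}=F$; the hypothesis $m_i>0$ guarantees that every branch yields a genuinely positive output. I expect the main obstacle to be the faithful implementation of the binary decision inside a single VM step: ensuring that the choice between $o_i$ and $d_{i+1}$ is realised purely by the least-weight path rule, \emph{without} any accidental transmission or premature output, and that the trigger virus of $h_1$ is consumed exactly once. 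Establishing invariants (i)--(iii) formally, by induction on the instant $t$, is routine, but it is where all the bookkeeping resides.
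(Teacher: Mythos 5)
Your proposal has a genuine gap, and it lies in the very first construction step: you install, for every $i\in\{1,\dots,k\}$, a channel $(h_1,h_0)$ of weight $m_i$, i.e.\ $k$ parallel arcs between the same ordered pair of vertices. But in Definition~\ref{def:VM} the host graph $D_H=(H\cup\{h_{out}\},E_H,w_H)$ is a weighted directed graph: $E_H$ is a \emph{set} of ordered pairs and $w_H$ assigns a single weight to each element of $E_H$, so the channel $(h_1,h_0)$ exists at most once and carries one fixed weight. (The instruction--channel graph does allow several instructions to be attached to one channel, but then they all open the \emph{same} channel with the \emph{same} weight.) Hence with two hosts there are at most four channels in total and at most two of them enter the environment, giving at most two distinct ``one-shot'' output weights; your output instructions $o_1,\dots,o_k$ would all dump the same number of viruses, and the machine cannot generate any $F$ with three or more distinct elements. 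Your control-flow ideas are fine --- ties under the least-weight rule at an empty host do give legitimate nondeterministic branching, and your instruction count $2k+1$ matches the statement --- but the branches have nothing different to branch \emph{to}, so invariants (i)--(iii) certify a machine that generates a set of size at most two, not $F$.

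The paper's proof avoids this obstruction by encoding the values of $F$ in \emph{unary} (number of emission rounds) rather than in channel weights. It starts with two viruses in $h_1$ and none in $h_2$, and alternates two kinds of instructions: an odd-indexed one sends a single virus from the currently nonempty host to the environment through a weight-$1$ channel, and an even-indexed one moves the remaining virus to the other host through a weight-$2$ channel, restoring two viruses there; so after $2x$ steps exactly $x$ viruses have been emitted and no host ever exceeds $2$ viruses. The instruction graph is a chain $i_1,\dots,i_{2m_k}$ (with $m_k=\max F$) plus a shortcut arc $(i_{2m_i-1},i_{2m_k})$ for each $m_i\in F$; since both out-arcs have weight $1$, the choice is nondeterministic and realises an early exit precisely when $m_i$ viruses are in the environment. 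Note the price: the number of instructions in that construction grows like $2\max F$, not like $2k+1$, because once values cannot be stored in weights they must be counted out step by step; any repair of your approach would have to pay the same price.
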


\begin{figure}[ht]
  \centering    
  \includegraphics[width = 0.75\linewidth]{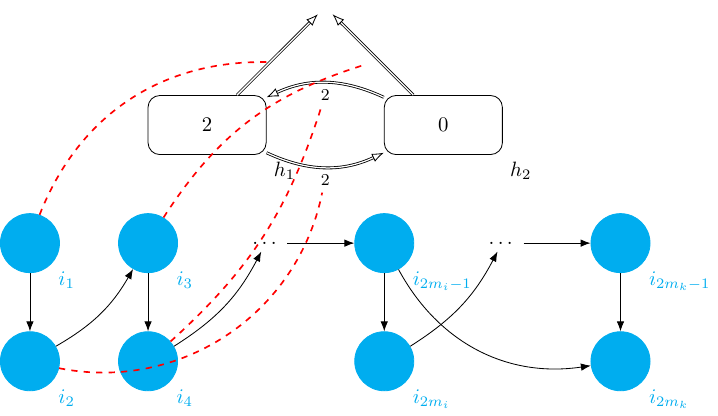}
  \caption{Virus machine generating the finite set $F=\{m_1,\dots,m_k\}$.}
  \label{fig:NFIN-tree_pn}
\end{figure}
\begin{proof}
Let $\Pi$ be the virus machine of degree $(2,2m_k)$ defined as\\
$\Pi = (\Gamma, H, I, D_{H}, D_{I}, G_{C}, n_{1}, n_2, i_{1},h_{out})$, where:

\begin{enumerate}
\item $\Gamma = \{v\}$;
\item $H = \{h_{1},h_2\}$;
\item $I = \{i_{1}, \dots , i_{2m_k}\}$;
\item
  $D_{H} = (H \cup \{h_{out}\}, E_H=\{(h_{1}, h_{2}),(h_{1}, h_{out}),(h_{2}, h_{1}),(h_{2}, h_{out})\}, w_{H}),$ where
  $w_{H}((h_{1}, h_{2})) = w_{H}((h_{2}, h_{1})) =2$ and\\
  $w_{H}((h_{1}, h_{out})) = w_{H}((h_{2}, h_{out})) = 1$;
\item
  $D_{I} = (I , E_I , w_I )$, where $E_I=\{(i_{a}, i_{a+1}) \ | \ a\in \{1,\dots, 2m_k-1\}\}\cup\\\{(i_{2m_i-1},i_{2m_k})\ | m_i\in F\}$,
  $w_{I}((i_{j}, i_{j'})) = 1,$ for each $(i_j,i_{j'})\in E_I$;
\item $G_{C} = (I \cup E_{H},E_C)$, where \\
$\begin{array}{rl}
       E_C = &\displaystyle\bigcup_{j \in \{0,\dots, m_k\}, j \text{ even }}(\{i_{2j+1},(h_1,h_{out})\},\{i_{2j},(h_1,h_{2})\})\cup\\
        & \displaystyle\bigcup_{j \in \{0,\dots, m_k\}, j \text{ odd }}(\{i_{2j+1},(h_2,h_{out})\},\{i_{2j},(h_2,h_{1})\});
\end{array}$
\item $n_{1} =2$ and $n_2= 0$;
\end{enumerate}

A visual representation of this virus machine can be found in Figure~\ref{fig:NFIN-tree_pn}. Let us prove that for each $m_i\in F$,
there exists a computation of $\Pi$ such that it produces $m_i$ viruses
in the environment in the halting configuration. Let $m_i$ be the generated number; the following invariant holds:

$$\varphi(x) \equiv \left\{\begin{array}{cc}
    \C_{2x} = (2,0,i_{2x+1},x) & x \text{ even}, \\
    \C_{2x} = (0,2,i_{2x+1},x) & x \text{ odd},
\end{array}\right.$$

for each $0\leq x\leq m_i-1$. In particular, $\varphi(m_i-1)$ is true, let us suppose that $m_i$ is odd, then the following computation is verified:
\begin{equation*}
    \begin{split}
        \C_{2(m_i-1)} &= (2,0,i_{2m_i-1}, m_i-1), \\
        \C_{2m_i} &= (1,0,i_{2m_k}, m_i), \\
        \C_{2m_i+1} &= (1,0,\#, m_i), 
    \end{split}
\end{equation*}

For $m_i$ even the computation is analogous, hence the computation halts in $2m_i+1$ steps and the number generated is $m_i$.
\end{proof}

Another interesting result is that this inclusion is strict.

\begin{proposition}\label{prop:2*2}
    $\mathrm{NFIN}\subsetneq \mathrm{NVM}(2,*,2)$.
\end{proposition}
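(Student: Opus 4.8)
The plan is to prove the inclusion and its strictness separately. For the inclusion $\mathrm{NFIN}\subseteq\mathrm{NVM}(2,*,2)$, Lemma~\ref{lem:2*2} already places every nonempty finite set of strictly positive numbers in $\mathrm{NVM}(2,*,2)$, so only two degenerate cases remain: the empty set and finite sets containing $0$. First I would handle $\emptyset$ with a machine whose single computation never reaches a halting configuration (for instance, an instruction cycle that is always traversed so that $u_t\in I$ for all $t$), so that no number is ever generated. Then, for a set $F$ with $0\in F$, I would augment the machine of Lemma~\ref{lem:2*2} by giving the initial instruction $i_1$ a second outgoing edge of equal weight leading to an instruction attached to no channel; since the output counter equals $0$ in the initial configuration, choosing this branch halts immediately and contributes $0$ to the generated set, while the original branch still produces the positive elements. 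These modifications preserve the bounds of $2$ hosts and $2$ viruses, yielding the inclusion.

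For strictness it suffices to exhibit one infinite set in $\mathrm{NVM}(2,*,2)$, since no infinite set lies in $\mathrm{NFIN}$. I would build a \emph{pump} machine of degree $(2,4)$ with $n_1=1$, $n_2=0$, and channels $(h_1,h_2)$ of weight $2$, $(h_2,h_0)$ of weight $1$, and $(h_2,h_1)$ of weight $1$, controlled by the cycle $i_1\to i_2\to i_3$ in which $i_1,i_2,i_3$ are attached to these three channels respectively. One traversal moves the single circulating virus $h_1\to h_2$ (splitting it into two copies in $h_2$), emits exactly one virus to the environment through $(h_2,h_0)$, and returns one virus to $h_1$, restoring the configuration $(1,0)$ while incrementing the output by $1$. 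At $i_3$ I would place two outgoing edges of equal weight, one back to $i_1$ and one to a channel-free halting instruction $i_4$, so that the loop is exited nondeterministically; exiting after $n$ traversals generates exactly $n$, so the machine generates $\{n\mid n\geq 1\}$, an infinite set. A direct check shows $a_{1,t}\in\{0,1\}$ and $a_{2,t}\in\{0,1,2\}$ throughout, so the two-virus bound is respected (recall that the bound constrains only the hosts in $H$, not the environment $h_0$).

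The main obstacle, and the only genuinely delicate point, is reconciling the demand for unboundedly many output viruses with the strict cap of two viruses per internal host: a naive emit-and-refill loop either depletes its source host or overflows it. The replication on the channel $(h_1,h_2)$ resolves this by duplicating the lone circulating virus precisely once per iteration, so that one copy can be shipped to $h_0$ and one retained to continue the loop. I would also verify that the nondeterministic exit is legitimate under the stated semantics, namely that at $i_3$ virus transmission does occur (hence the highest-weight path is taken) and that the two exit edges are tied in weight, making the branch a genuine nondeterministic choice rather than a forced move.
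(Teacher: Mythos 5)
Your strictness argument is essentially the paper's own proof. The paper exhibits a machine $\Pi_{Nat}$ of degree $(2,4)$ with exactly your channel structure --- a weight-$2$ duplication channel $(h_1,h_2)$ and weight-$1$ channels $(h_2,h_1)$ and $(h_2,h_{out})$ --- a three-instruction cycle circulating a single virus with invariant $\C_{3k}=(1,0,i_1,k)$, and a nondeterministic exit to a channel-free fourth instruction; it generates $\N\setminus\{0\}$, and the two-virus bound holds for the reason you give. The only difference is that the paper returns the virus to $h_1$ before emitting to the environment while you emit first and return afterwards; this is immaterial.

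Where you depart from the paper is the inclusion, and there you have a genuine (if small) error. The paper disposes of the inclusion by citing Lemma~\ref{lem:2*2} alone, tacitly ignoring $\emptyset$ and sets containing $0$; your handling of $\emptyset$ by a non-halting instruction cycle is fine, but your fix for $0\in F$ fails under the stated semantics. In the machine of Lemma~\ref{lem:2*2}, the initial instruction $i_1$ is attached to the channel $(h_1,h_{out})$ and $h_1$ initially holds two viruses, so when $i_1$ is activated there \emph{is} virus transmission: one virus reaches the environment, and only \emph{after} that is the next instruction selected from the outgoing edges of $i_1$. Consequently, taking your new branch halts with output $1$, not $0$, and your modified machine generates $F\cup\{1\}$ rather than $F\cup\{0\}$ (note $1$ need not belong to $F$, so this is not harmless). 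The repair is easy: prepend a fresh initial instruction $i_0$ attached to no channel, with two equal-weight outgoing edges, one to a channel-free halting instruction (yielding output $0$) and one to the old $i_1$ (yielding the positive elements of $F$); since $i_0$ opens no channel, nothing is emitted before the branch, and the added instruction changes neither the host count nor the virus bound.
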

\begin{proof}
    Inclusion is direct by the Lemma~\ref{lem:2*2}. Let us now focus on inequality; for that, we construct a virus machine from~\cite{vm_matriz_bwmc2024}, extending the work from~\cite{vm_matriz2024}, which generates the set of all natural numbers except the zero, which verifies the restrictions of the proposition. 

     Let $\Pi_{Nat} = (\Gamma, H, I, D_{H}, D_{I}, G_{C}, 1,0, i_{1}, h_{out})$, where:

\begin{enumerate}
\item $\Gamma = \{v\}$;
\item $H = \{h_{1},h_2\}$;
\item $I = \{i_{1}, \dots , i_4\}$;
\item
  $D_{H} = (H \cup \{h_{out}\}, \{(h_{1}, h_{2}),(h_{2}, h_{out}),(h_{2}, h_{1})\}, w_{H}),$ where\\
  $w_{H}((h_{1}, h_{2})) = 2$ and
  $w_{H}((h_{2}, h_{out})) = w_{H}((h_{2}, h_{1})) = 1$;
\item
  $D_{I} = (I , E_I , w_I ),$ where $E_I=\{(i_1,i_2),(i_2,_3),(i_3,i_1),(i_3,i_4)\}$, and\\
  $w_{I}((i_{j}, i_{j'})) = 1 \ \forall (i_j,i_{j'})\in E_I$;
\item $G_{C} = (I \cup E_{H},E_C)$, where $E_C =\{\{i_1,(h_1,h_2)\},\{i_2,(h_2,h_1)\},\\
\{i_3,(h_2,h_{out})\}\}$;
\item $h_{out} = h_{0}$;
\end{enumerate}

\begin{figure}[ht]
  \centering    
  \includegraphics[width = 0.5\linewidth]{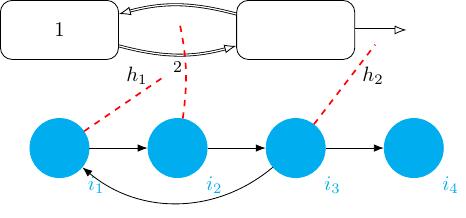}
  \caption{Virus machine generating the set of natural numbers $\N\setminus\{0\}$.}
  \label{fig:natset}
\end{figure}

A visual representation of this virus machine can be found in
Fig.~\ref{fig:natset}. Now, let us prove that for each $n\in\N$, there exists a halting computation generating the number $n$. For generating this number, the following invariant holds:

\begin{equation*}
    \begin{split}
        \varphi(k) \equiv \C_{3k} =(1,0,i_1,k), \text{ for each } 0\leq k\leq n-1
    \end{split}
\end{equation*}

In particular, $\varphi(n-1)$ is true, then the following configuration is verified $\C_{3(n-1)} = (1,0,i_1,n-1)$, from here, after the $4$ transition steps the halting configuration is reached $\C_{3n+1} =(1,0,\#,n)$, whose output is the natural number $n$.
\end{proof}

\subsection{Singleton Sets}

Now let us move to the second family of sets, the Singleton sets, these are sets of natural numbers with only one element, in this work we include the empty set in this family.

\begin{theorem}\label{teo:singlesets}
The following sets of numbers are equivalent to singleton sets:
   \begin{enumerate}
       \item $\mathrm{NVM}(1,*,1)$;
       \item $\mathrm{NVM}(*,1,*)$; 
       \item $\mathrm{NVM}(1,1,1)$
   \end{enumerate}
\end{theorem}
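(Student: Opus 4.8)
The plan is to prove equality with the family of (at most one element) singleton sets by a double inclusion, organising the work around the trivial containments $\mathrm{NVM}(1,1,1)\subseteq \mathrm{NVM}(1,*,1)$ and $\mathrm{NVM}(1,1,1)\subseteq \mathrm{NVM}(*,1,*)$. Thus it suffices to show (i) every singleton set and the empty set can already be generated inside the smallest class $\mathrm{NVM}(1,1,1)$, which yields one inclusion simultaneously for all three; and (ii) each of the two larger classes $\mathrm{NVM}(*,1,*)$ and $\mathrm{NVM}(1,*,1)$ can generate nothing but singletons, the bound for $\mathrm{NVM}(1,1,1)$ then being inherited.

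For (i) I would exhibit explicit machines. To generate $\{c\}$ with $c\geq 1$ I take a single host $h_1$ with $n_1=1$, the single channel $(h_1,h_0)$ with weight $w_H((h_1,h_0))=c$, and one instruction $i_1$ attached to it; the only computation fires once, expelling $c$ viruses, and halts. The set $\{0\}$ is obtained by instead starting with $n_1=0$ (or leaving $i_1$ unattached), and the empty set by adding the arc $(i_1,i_1)$ to the instruction graph so that the computation never reaches a halting configuration. All three constructions use one host, one instruction and at most one virus per host, so they witness membership in $\mathrm{NVM}(1,1,1)$.

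For the upper bound on $\mathrm{NVM}(*,1,*)$ I would argue by determinism: with a single instruction the instruction graph is reachable from $i_1$ and carries at most the self-arc $(i_1,i_1)$, so there is never a genuine choice of successor. Hence any computation either performs at most one transmission and then halts, producing a single fixed value of $a_{0}$, or it runs forever on the self-arc and produces the empty set. In all cases the generated set has at most one element.

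The hard part will be the upper bound for $\mathrm{NVM}(1,*,1)$, where unboundedly many instructions reintroduce nondeterminism. Here the only arc that can feed the environment is $(h_1,h_0)$, say of weight $w$; each use requires a virus in $h_1$ and empties it, and since there is a single host bounded by one virus, $h_1$ can never be refilled (the environment is a sink and self-channels are excluded), so this arc fires at most once. I would formalise this with the invariant $a_{0,t}+w\,a_{1,t}=w\,n_1$, which forces $a_{0,t}\in\{0,w\}$ and makes the halting output equal to $(n_1-a_{1,\mathrm{halt}})\,w$. The delicate step, and the real obstacle, is to show that the value $a_{1,\mathrm{halt}}$ is the same along every nondeterministic halting branch, i.e. that a single machine cannot halt both having and not having expelled its virus; I expect to settle this from the semantics, observing that once $h_1$ is non-empty the first activation of the unique output instruction is forced to transmit, so no halting branch can avoid the (single) expulsion while another performs it. Granting this, every machine in $\mathrm{NVM}(1,*,1)$ generates $\{0\}$, some $\{w\}$, or $\emptyset$, and together with (i) and the inherited bound for $\mathrm{NVM}(1,1,1)$ the three characterisations follow.
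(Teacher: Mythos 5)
Your overall architecture (push the lower bound into the smallest class $\mathrm{NVM}(1,1,1)$, then bound the two larger classes from above) is essentially the paper's double-inclusion proof, and your explicit machines for $\{c\}$, $\{0\}$ and $\emptyset$ coincide with the paper's $\Pi_{sing_1}$ and its obvious variants, as does your one-instruction case analysis for $\mathrm{NVM}(*,1,*)$. The problem is exactly the step you flag as delicate: your proposed resolution is a non sequitur. You argue that ``once $h_1$ is non-empty the first activation of the unique output instruction is forced to transmit, so no halting branch can avoid the expulsion while another performs it.'' Forced transmission \emph{upon activation} does not imply that every halting branch \emph{activates} an instruction attached to $(h_1,h_0)$: nondeterministic branching can occur before any attached instruction is reached, and one branch can halt without ever opening the channel.

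Concretely, the paper's definition permits instructions attached to no channel (``each instruction is attached to only one channel at most''; see the unattached instruction $i_4$ in the worked example $\Pi_{ex}$), and the natural semantics for such an instruction is that no channel is opened and the successor follows the least-weight arcs, ties broken nondeterministically. Now take one host $h_1$ with $n_1=1$, the channel $(h_1,h_0)$ of weight $w\geq 1$, an unattached initial instruction $i_1$ with two weight-$1$ arcs to $i_2$ and $i_3$, where $i_2$ is attached to $(h_1,h_0)$, $i_3$ is unattached, and neither $i_2$ nor $i_3$ has out-arcs. The branch through $i_2$ halts with output $w$, the branch through $i_3$ halts with output $0$, so this machine witnesses $\{0,w\}\in \mathrm{NVM}(1,3,1)$, a two-element set. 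Hence the property you need cannot be ``settled from the semantics'' as you hope; it holds only under the extra hypothesis that every instruction is attached to exactly one channel (as in the foundational papers), in which case the very first step of every computation transmits and all halting branches agree. For what it is worth, the paper's own proof of this inclusion is a two-sentence assertion (``the only number generated is $w$ or none'') that silently makes that assumption and also overlooks the $\{0\}$ case; you correctly isolated the weak point of the statement, but the argument you offer to close it is invalid, so the proposal as written has a genuine gap.
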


\begin{proof}
    The proof of equivalence is done by the double inclusion technique.
    \begin{enumerate}
        \item \boxed{\subseteq} Let $\Gamma = \{v\}$ be a singleton set of natural number $v\in \N$, then it can be generated by the VM $\Pi_{sing_1}$ of degree $(1,1)$ depicted in Figure~\ref{fig:sing-1}. Here the initial configuration is $\C_0 = (1,i_1,0)$ and in the following configuration, a virus is consumed and replicated by the weight of the arc, that is, $v$, and sent to the environment, leading to the halting configuration $\C_1 =(0,\#, v)$. Thus, after one transition step, the set generated is $\{v\}$.

    \begin{figure}[h!]
        \centering
        \includegraphics[width= 0.3\linewidth]{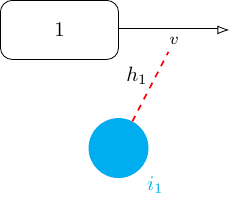}
        \caption{The VM $\Pi_{sing_1}$ generating the singleton set $\{v\}$.}
        \label{fig:sing-1}
    \end{figure}

    \boxed{\supseteq} Suppose any VM with only one host and one virus: the host can only be attached to the environment, and let us fix that the weight of that channel is $w\in \N$.
    Thus, the only number generated is $w$ or none, depending on the instruction graph (if the computation halts or not). Thus, we generate a singleton set.
    
    \item \boxed{\subseteq} Let us use the VM $\Pi_{sing_1}$ depicted in Figure~\ref{fig:sing-1} as it only has one instruction and the inclusion has already been proven.
    
    \boxed{\supseteq} With only one instruction, there are two possibilities in the instruction graph: \begin{itemize}
        \item The node with a self-arc, which creates an infinite loop, thus a non-halting computation and generating the empty set.
        \item The node with no arcs, thus the machine, halts after only one transition step as there is no other possible path. In this sense, two options can be separated:
        \begin{itemize}
            \item The instruction is attached to a channel which is attached to the environment, generating a singleton set.
            \item The instruction is not attached to a channel which is attached to the environment, thus the set generated is $\{0\}$.
        \end{itemize}
    \end{itemize}

    \item Lastly, the left side inclusion is again using $\Pi_{sing_1}$, and the right side is direct as the restrictions are stronger than the previous statements.
    \end{enumerate}
\end{proof}

\subsection{Finite linear progressions}

The computing power of virus machines highly depends on the instruction graph; to highlight this, let us see the following result when we bound by $2$ the amount of instructions. For this, we will fix the following notation: 

Let $\mathrm{NLinFIN} = \bigcup_{x\in\N}\bigcup_{n\in\N}\bigcup_{N\in\N}(\{x + n\cdot i: 0\leq i\leq N\}) \cup \{\ \emptyset\}$, be the family of finite linear progressions. The following result holds.

\begin{proposition}\label{prop:*2*}
    $\mathrm{NVM}(p,2,*) = \mathrm{NLinFIN},$ for each $p\geq 2$.
\end{proposition}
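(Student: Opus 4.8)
The plan is to establish the equality by double inclusion, exploiting that with only two instructions the instruction graph is almost trivial. By the Invariance proposition we may assume $D_I$ is rooted at $i_1$, so that every instruction is reachable from $i_1$; with $I=\{i_1,i_2\}$ this forces the arc $(i_1,i_2)\in E_I$, and hence $i_1$ is never a terminal node. Since each instruction is attached to at most one channel, at most two channels are ever opened during a computation, which is exactly why the number of hosts $p$ turns out to be irrelevant beyond the requirement $p\ge 2$.

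For the inclusion $\mathrm{NVM}(p,2,*)\subseteq \mathrm{NLinFIN}$, I would first note that a halting configuration can occur only at a node with no outgoing arc; as $i_1$ always carries the arc to $i_2$, halting is possible only when $i_2$ is terminal, and otherwise every computation diverges and the machine generates $\emptyset\in\mathrm{NLinFIN}$. Assuming $i_2$ terminal, the only arcs are $(i_1,i_2)$ and possibly the self-loop $(i_1,i_1)$, so every halting computation has the shape ``$i_1$ repeated $k$ times, then one application of $i_2$, then halt''. I would then set up an invariant, in the style of Lemma~\ref{lem:2*2}, describing the contents of the (at most two relevant) hosts and of the environment after $k$ iterations of the $i_1$-loop. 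A short case analysis on the two weights leaving $i_1$ shows that $k$ is either forced to a single value (strict inequality of the weights, or an empty source host: a deterministic loop) or ranges over an interval $\{1,\dots,M\}$ bounded by the initial content $M$ of the loop's source host (equal weights: a nondeterministic tie, each visit nondeterministically repeating or exiting). Because an output channel cannot be a self-channel, the loop's source host strictly decreases, so the loop always terminates, and the environment receives a fixed multiple of the loop weight per iteration plus a constant contributed by the single $i_2$ step. Hence every generated set is $\emptyset$, a singleton, or of the form $\{C+k\cdot w : 1\le k\le M\}$, i.e. a finite linear progression.

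For the reverse inclusion I would give an explicit construction with two hosts and two instructions realizing a prescribed progression: $i_1$ carries the self-loop with its two outgoing arcs assigned equal weight, so that at each visit the computation nondeterministically repeats or exits to $i_2$; its channel sends viruses to the environment with weight equal to the common difference, and the loop's source host is initialised with the intended number of terms, so that the loop contributes $k$ copies of the common difference for the chosen $k$. The terminal instruction $i_2$ then adds the base value through its own channel to the environment. An invariant analogous to the one above verifies that the set of halting outputs is exactly the target progression; non-halting variants and the degenerate common difference $0$ give $\emptyset$ and singletons, and since only two hosts are used the construction is valid for every $p\ge 2$.

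The main obstacle is the $\subseteq$ case analysis: one must be exhaustive over the admissible topologies and weightings of the two-node instruction graph, together with the routing dictated by emptiness of the source host and by ties, and argue in each case both that the computation halts and that the reachable outputs form a single progression rather than a more complicated finite set. A secondary delicate point, on the constructive side, is tuning the initial virus counts, the channel weights and the base contribution of $i_2$ so that every admissible triple $(x,n,N)$ is matched, reconciling the fact that the first loop iteration already performs one transmission with the requirement that the base term $x$ appear among the outputs.
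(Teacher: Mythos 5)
Your overall strategy is the same as the paper's: both directions are argued identically, with the unique nondeterministic two-node instruction graph (self-loop on $i_1$ plus an arc to a terminal $i_2$, equal weights) driving the $\subseteq$ analysis, and the two-host ``loop plus one-virus base host'' machine for $\supseteq$. Your $\subseteq$ case analysis is in fact more explicit than the paper's (which only remarks that deterministic graphs give singletons and then points to Figure~\ref{fig:host-inst}). One case you gloss over: the $i_2$ step is not always ``a constant'', since $i_2$ may be attached to a channel whose source is the loop's own source host $h_1$, in which case its contribution depends on whether the chosen $k$ equals $M$; this is harmless only because at most one channel $(h_1,h_0)$ can exist, so the two weights then coincide and the resulting set, e.g.\ $\{kn : 2\le k\le M\}$, is still a progression --- but it should be said.

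The genuine gap is precisely the point you defer as a ``secondary delicate point'': it cannot be resolved, and (remarkably) the paper's own proof stumbles on the same spot. In any halting computation of the loop construction, $i_1$ is the initial instruction and its source host is nonempty by construction, so the very first activation already transmits; hence the generated set is $\{C + kn : 1\le k\le M\}$ where $C\ge 0$ is the weight of $i_2$'s channel. (The paper claims the computation also exists for $k=0$, i.e.\ output $x$, but that contradicts the semantics: in $\C_0$ the next instruction is $i_1$, not $i_2$.) To realize $\{x + ni : 0\le i\le N\}$ one is forced to take $C = x-n$, which is impossible when $x<n$. No cleverer two-instruction machine repairs this: your own $\subseteq$ analysis shows that every generated set with at least three elements has the form $\{C+kn : 1\le k\le M\}$ with $C\ge 0$, or $\{kn-1 : 1\le k\le M\}$ if $i_2$ drains the environment through a channel $(h_0,h_b)$, so its minimum is at least $n-1$; consequently a set such as $\{0,2,4\}\in\mathrm{NLinFIN}$ (take $x=0$, $n=2$, $N=2$) lies outside $\mathrm{NVM}(p,2,*)$. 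So the $\supseteq$ inclusion, as you and the paper both state it, fails for progressions whose base term is smaller than the common difference; a correct version must either restrict $\mathrm{NLinFIN}$ to progressions $\{x+ni : 1\le i\le N\}$ (equivalently, require $x\ge n$) or weaken the proposition to a strict inclusion $\mathrm{NVM}(p,2,*)\subsetneq \mathrm{NLinFIN}$.
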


\begin{proof}
    \boxed{\supseteq} For any $x,n,N\in \N$, let us see that there exists a VM $\Pi_{Lin}$ of degree $(2,2)$ that generates the set $\{x + n\cdot i: 0\leq i\leq N\}$. The virus machine can be depicted in Figure~\ref{fig:host-inst}.

    \begin{figure}[ht]
        \centering
        \includegraphics[width= 0.4\linewidth]{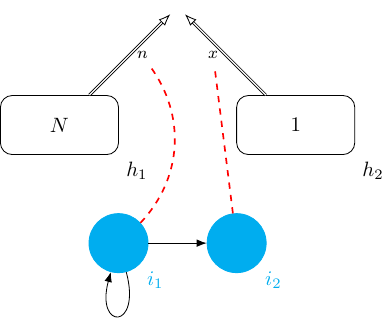}
        \caption{The VM $\Pi_{Lin}$ generating the set $\{x + n\cdot i: 0\leq i\leq N\}$.}
        \label{fig:host-inst}
    \end{figure}

    Suppose that the number generated is $x + n\cdot k$ with $0\leq k \leq N$, then the following computation holds.
    
    \begin{equation*}
        \begin{split}
            \C_{0} &= (N,1,i_1,0), \\
            \C_{1} &= (N-1,1,i_1,n), \\
            & \vdots  \\
            \C_{k-1} &= (N-(k-1),1,i_1,n\cdot(k-1)), \\
            \C_{k} &= (N-k,1,i_2,n\cdot k), \\
            \C_{k+1} &= (N-k,1-1,\#,x + n\cdot k),
        \end{split}
    \end{equation*}

    Therefore, the number is generated, the other inclusion is straightforward.

    \boxed{\subseteq} Some previous considerations should be taken into account. First, let us fix that $I=\{i_1,i_2\}$ is the set of instructions and $i_1$ is the initial instruction. We will consider the instruction graphs where there is at least one halting computation. On the other hand, we will consider those in which there is more than one computation, that is, there is a least one non-deterministic decision, otherwise only the singleton sets can be generated (which are included in $\mathrm{NLinFIN}$). 

    With all of this in mind, only one instruction graph remains, the same as depicted in Figure~\ref{fig:host-inst}. What can be different is the instruction-channel graph, if $i_1$ is attached to channel not connected to the environment, then only singleton sets can be generated, otherwise the arithmetic progression as stated before.
\end{proof}

\subsection{Finite linear combinations}
Continuing with the idea of the previous subsection, let us see that with 3 instructions we still have a strong limitation in computational power. First, we define a family of sets that we will try to characterise, let:

\begin{equation*}
    \begin{split}
        a_{w_1,w_2,N_1,N_2} =& \{w_1x+w_2y+r\ |\ 1\leq x\leq N_1,\ 1\leq y\leq N_2\};\\
        b_{w_1,w_2,N_1,N_2} =& \{f_{w_1,w_2}^{N_1,N_2}(x,y)\ |\ 1\leq x\leq \min(N_1,N_2),\\
        & \ 1\leq y\leq |N_2-N_1|\};\\
        f_{w_1,w_2}^{N_1,N_2}(x,y) =& \left\{\begin{array}{ll}
            (w_1+w_2)x+r, & x < \min(N_1,N_2); \\
            (w_1+w_2)N_1 + w_2y + r, & x=N_1\wedge N_1<N_2,\\
            (w_1+w_2)N_2 + w_1y + r, & x=N_2\wedge N_2<N_1,\\  
        \end{array}\right.
    \end{split}
\end{equation*}

For each $w_1,w_2,N_1,N_2\in\N$. Finally, letting
$A = \{a_{w_1,w_2,N_1,N_2}\}_{w_1,w_2,N_1,N_2\in\N},\\ B = \{b_{w_1,w_2,N_1,N_2}\}_{w_1,w_2,N_1,N_2\in\N},$ be two families of sets of natural numbers, we define finite linear combinations $\mathrm{NCombFIN} = \{\emptyset\}\cup A \cup B$.

\begin{proposition}\label{prop:*3*}
    $ \mathrm{NCombFIN} = \mathrm{NVM}(p,3,*),$ for each $p\geq 3$.
\end{proposition}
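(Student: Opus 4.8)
The plan is to prove the equality $\mathrm{NCombFIN} = \mathrm{NVM}(p,3,*)$ by double inclusion, mirroring the structure of the proof of Proposition~\ref{prop:*2*} but now accounting for the richer family of instruction graphs that three instructions permit. The forward inclusion $\supseteq$ is the constructive part: for each set $a_{w_1,w_2,N_1,N_2} \in A$ and each $b_{w_1,w_2,N_1,N_2} \in B$, I would exhibit an explicit virus machine of degree $(p,3)$ (with $p$ as small as $3$) that generates it, and then verify correctness by an invariant formula of the same flavour as those used earlier (e.g.\ tracking the configuration $\C_t$ through the loop and reading off the environment count $a_{0,t}$ at the halting step). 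The two families $A$ and $B$ correspond to the two qualitatively different ways the three instructions can be wired: $A$ arises when the two ``counting'' loops are traversed independently (two separate nondeterministic exits feeding linearly into the output), giving an unconstrained pair $(x,y)$ with $1\le x\le N_1$, $1\le y\le N_2$; $B$ arises when the loops are nested or chained so that one counter is only incremented after the other saturates, which is exactly what the piecewise definition of $f_{w_1,w_2}^{N_1,N_2}$ encodes via the $\min$/$|N_2-N_1|$ bounds and the case split on whether $N_1<N_2$ or $N_2<N_1$.

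For the reverse inclusion $\subseteq$, the strategy is an exhaustive-but-organised case analysis over all admissible instruction graphs on $I=\{i_1,i_2,i_3\}$, using the standing assumption (from the Invariance proposition) that $D_I$ is a rooted tree-like structure with root $i_1$, together with the pruning we already applied in Proposition~\ref{prop:*2*}: we discard graphs with no halting computation and graphs that are fully deterministic (these yield only singleton sets, which lie in $\mathrm{NLinFIN}\subseteq\mathrm{NCombFIN}$, or the empty set). After discarding degenerate cases, the number of genuinely distinct instruction-graph topologies on three nodes that admit nondeterministic branching is small, and for each surviving topology I would argue that the generated set must coincide with some $a_{w_1,w_2,N_1,N_2}$ or some $b_{w_1,w_2,N_1,N_2}$, by reading the weights $w_1,w_2$ off the channels attached to $i_1,i_2,i_3$ and reading the bounds $N_1,N_2$ off the initial virus counts in the hosts that drive the loops. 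The constant offset $r$ is recovered from whichever output-channel weights fire on the terminating branch.

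I expect the main obstacle to be the $\subseteq$ direction, and specifically the bookkeeping needed to show that \emph{every} surviving instruction graph produces a set already named in $A\cup B$ and produces \emph{nothing else}. Two points are delicate. First, one must rule out that the extra instruction (compared with the two-instruction case) can create a third independent loop or a more complex combination than the two shapes in $A$ and $B$; this requires using the tree-rootedness of $D_I$ and the constraint from Definition~\ref{def:VM} that each instruction is attached to at most one channel, which caps how many distinct transmissions a single loop traversal can perform. Second, the nested case leading to $B$ needs care: one must verify that the arithmetic of ``run loop one until its host empties, then switch to loop two'' really yields the piecewise function $f_{w_1,w_2}^{N_1,N_2}$ with the stated ranges, including the correct treatment of the boundary where the two loop lengths are equal versus unequal. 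I would isolate this as a separate lemma-style computation (an invariant tracking both host counts simultaneously across the two phases) rather than embedding it inline, since the case split on $\mathrm{sgn}(N_1-N_2)$ is where sign errors and off-by-one errors are most likely to creep in.
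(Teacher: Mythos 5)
Your overall skeleton coincides with the paper's proof: the inclusion $\mathrm{NCombFIN}\subseteq\mathrm{NVM}(p,3,*)$ is handled by exhibiting explicit machines of degree $(3,3)$ for the families $A$ and $B$ (plus a trivial non-halting machine for $\emptyset$), and the converse inclusion by an exhaustive case analysis over instruction graphs on $\{i_1,i_2,i_3\}$, which the paper organises by the size ($0$, $1$, $2$ or $3$) of the loops after fixing $i_3$ as the out-degree-zero instruction and discarding non-halting and purely deterministic topologies --- exactly the pruning you describe. Your use of the at-most-one-channel-per-instruction constraint and of reachability from $i_1$ to bound the possible topologies also matches the paper's argument.

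The genuine gap is in your account of family $B$, which is precisely the new content of this proposition relative to Proposition~\ref{prop:*2*}. $B$ does not arise from ``nested'' loops, nor from the mechanism ``run loop one until its host empties, then switch to loop two.'' That sequential mechanism generates sets of the form $\{w_1N_1+w_2y+r \mid 1\le y\le N_2\}$, which are degenerate instances of $A$ (absorb $w_1N_1$ into the constant offset), not members of $B$. What actually produces $B$ is the cycle of size $2$ between $i_1$ and $i_2$ (Figure~\ref{fig:loop2}): $i_1$ opens $(h_1,h_0)$ with weight $w_1$, $i_2$ opens $(h_2,h_0)$ with weight $w_2$, and each traversal of the cycle removes one virus from \emph{each} host, so the two hosts are drained in lockstep and each round contributes $w_1+w_2$ to the environment. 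This lockstep draining is exactly what yields the terms $(w_1+w_2)x+r$ for $x<\min(N_1,N_2)$; only after the smaller host is exhausted does the surviving host contribute alone, giving the surplus terms $w_1y$ or $w_2y$ in $f_{w_1,w_2}^{N_1,N_2}$, with the nondeterministic exit to $i_3$ determining when the computation stops. If you build your $B$-machines from the sequential reading, the forward inclusion is not established for any nondegenerate set in $B$; and in the converse direction, when the size-$2$ cycle appears in your enumeration, you would misclassify the set it generates. So the separate lemma you propose is indeed the right thing to isolate, but its invariant must track the two-phase lockstep dynamics of the $2$-cycle (both host counts decreasing together, then one alone), not two loops run one after the other.
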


\begin{proof}
\boxed{\subseteq} The idea of this inclusion is to prove that there exists a virus machine for generating: (i) the empty set, (ii) the family of sets $A$, and (iii) the family of sets $B$.

\begin{itemize}
    \item[$(\emptyset)$] This is trivial, any VM with no halting computations generates the empty set, for instance, a VM of degree $(p,3)$ with $p\geq 1$, that has a loop of size $3$, thus there cannot be a halting computation and the empty set is generated.
    \item[$(A)$] Let $\Pi_1$ be the virus machine of degree $(3,3)$ visually pre\-sented in Figure~\ref{fig:nlincombfin}. Let us suppose that the number ge\-nerated is $m = w_1x'+ w_2y' +r$; then, the following compu\-ta\-tion holds. The initial configuration is $\C_0 = (N_1,N_2,1,i_1,\\0),$ from here, one virus is transmitted from host $h_1$ to the environment replicated by $w_1$, the following instruction is non-deterministically chosen between $i_1$ and $i_2$, here we choose $i_1$. This process is repeated $N_1-x'-1$ times. Thus we reach the following configuration, $\C_{x'-1} = (N_1-
    x'+1,N_2, 1, i_1,w_1(x'-1)).$ Here we choose instruction $i_2$, leading to the configuration $C_{x'}= (N_1-x',N_2,1,i_2,w_1x')$. Now the process is analogous with instruction $i_2$ and host $h_2$. After $y'$ transition steps, we choose instruction $i_3$, which will open the channel $(h_3,h_0)$, reaching the halting configuration: $C_{x'+y'+2}= (N_1-x',N_2-y', 0, \#, w_1x'+w_2y'+r),$ that is, after $x'+y'+2$ steps, the machine halts and sends $w_1x'+w_2y'+r$ viruses to the environment.

\begin{figure}[ht]
    \centering
    \includegraphics[width = 0.6\linewidth]{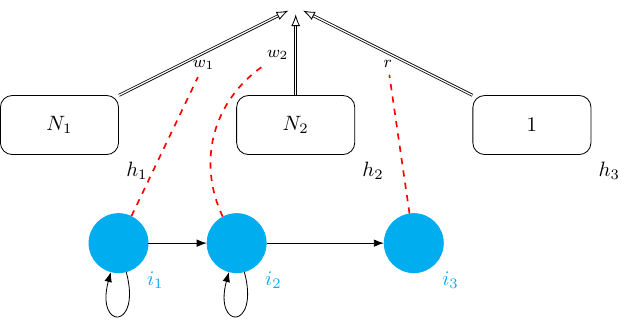}
    \caption{Virus machine $\Pi_1$.}
    \label{fig:nlincombfin}
\end{figure} 

\item[$(B)$] Let $\Pi$ be the VM of degree $(3,3)$ presented in Figure~\ref{fig:loop2}. Let us see that it can generate any $b_{w_1,w_2,N_1,N_2},$ for each $w_1,w_2,N_1,N_2\in \N.$ The main difference with the previous VM is the loop between instruction $i_1$ and $i_2$, that means that we are sending the same amount of viruses from host $h_1$ and host $h_2$ to the output region, unless the loop is repeated more than $\min(N_1,N_2)$ times, then we are sending viruses only from the host that remains some viruses. After that we send $1$ viruses from host $h_3$ and replicated by $r$. Finally, the halting configuration is reached, having sent any of the elements from $b_{w_1,w_2,N_1,N_2}$.

        \begin{figure}[ht]
            \centering
            \includegraphics[width = 0.6\linewidth]{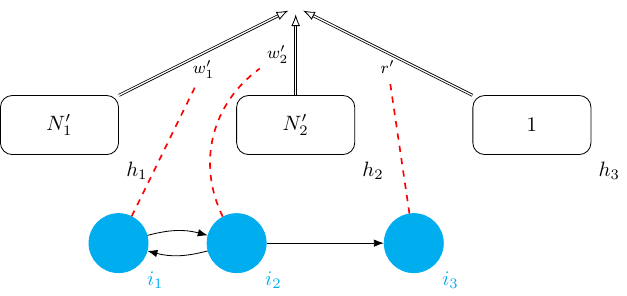}
            \caption{Virus machine with a loop of size 2 in the instruction graph.}
            \label{fig:loop2}
        \end{figure}
        
\end{itemize}

\boxed{\supseteq} The idea of this proof is based on the possible loops that can exist in a virus machine with 3 instructions. It is important to note that the way to maximise the amount of numbers that can be generated by an explicit virus machine depends on the amount (or size) of nondeterministic loops in the device. 

    We will fix that the initial configuration is always denoted by $i_1$, and there has to be at least one instruction with out-degree zero; otherwise, we are generating the empty set (which is trivially included). We will fix the instruction with out-degree zero is $i_3$.

    Lastly, it is important to note that the finite linear progression is included in this family of sets, we just need to fix that $w_2=0$.

    \begin{itemize}
        \item[Size 0] With no loops, we generate singleton sets and sets of size two, fixing that $i_1$ is attached to both $i_2$ and $i_3$ with weight $1$.
        \item[Size 1] Here we have the VM presented the previous VM presented in Figure~\ref{fig:nlincombfin}. The loops can only be in $i_1$ and $i_2$. If we fix only one, we can only generate a subset of the previous VM.

        Regarding the host graph, we will suppose that at least one of the instructions from $i_1$ and $i_2$ is not attached to a channel associated with the environment. But in those cases we are generating finite linear progressions, that is included in this family of sets. 
        
        \item[Size 2] Fixed that the out-degree of instruction $i_3$, the only possibility is that the loop of size $2$ is between instructions $i_1$ and $i_2$. In addition to this, there can exist other loops of size $1$, these are omitted because we are reaching the previous case. 

        Regarding the host graph, to generate bigger sets than the singleton sets, we need that at least one instruction of $i_1$ and $i_2$ send viruses to the environment. We will suppose both of them send viruses from different channels as in other case we are generating the finite linear progressions that are included.

        With all previous cases discarded, there is only one kind of virus machine remaining, this is represented in Figure~\ref{fig:loop2}.
        
        \item[Size 3] This is trivial, as there is no halting computation, thus the empty set is generated.
    \end{itemize}
\end{proof}

\subsection{Discussion of old ingredients}\label{subsec:discOld}

In this subsection a brief discussion of the results obtained with the previous results is presented in Table~\ref{tab:1}. Here some new characterisations have been proven below the finite sets, such as the singleton sets, the $\mathrm{NLinFIN}$ and the $\mathrm{NCombFIN}$. This was possible due to the limit of the number of instructions. Lastly, a new inclusion of the finite sets (and even more, strict inclusion) has been stated.

\begin{table}[ht]
\centering
\begin{tabular}{|l|c|l|l|l|}
\hline
Family of sets               & Relation     & host  & inst. & n.v.  \\ \hline
\multirow{1}{*}{Singleton}   & (Theorem~\ref{teo:singlesets}) $=$ &  1   &  1  &  1   \\ \hline
\multirow{1}{*}{$\mathrm{NLinFIN}$}   & (Proposition~\ref{prop:*2*}) $=$ & 2 & 2 & * \\ \hline
\multirow{1}{*}{$\mathrm{NCombFIN}$}  & (Proposition~\ref{prop:*3*}) $=$ & 3 & 3 & * \\ \hline
\multirow{3}{*}{Finite sets} & \cite{GCRVM} $\subseteq$  & 1  & * & * \\  
                             & \cite{GCRVM} $\subseteq$ & * & * & 1  \\ 
                             & (Proposition~\ref{prop:2*2}) $\subsetneq$ & 2 & *  & 2    \\ \hline
\multirow{1}{*}{$\mathrm{SLIN}$}       & \cite{computing-VM}  $=$ & * & *  & 2   \\ \hline 
$\mathrm{NRE}$                         & \cite{computing-VM} $=$ & *  & * & *  \\ \hline
\end{tabular}
\caption{Discussion of minimum resources needed for generating some family of sets of natural numbers.}
\label{tab:1}
\end{table}

\section{Novel results with new ingredients}\label{sec:newIng}

For virus machines in generating mode, the notation used in previous works \cite{GCRVM} was $\mathrm{NVM}(p,q,r)$ where it denotes the family of natural number sets generated by virus machines with at most $p$ hosts, $q$ instructions and $r$ viruses at each instant in the computation. The computational completeness has been proven when these ingredients are unbounded \cite{VM}, that is, $\mathrm{NRE} = \mathrm{NVM}(*,*,*)$, but the power decreases when one of them is bounded, for example, a characterisation of semilinear sets has been proven in \cite{computing-VM}, that is, $\mathrm{SLIN}= \mathrm{NVM}(*,*,r)$ for each $r\geq 2$. However, the three ingredients mentioned seem to provide poor information about the computation. 
Thus, we propose the following notation:

\begin{equation}
    \mathrm{NVM}_{\beta}(h_p,i_q,nvh_r,wc_s,outd_t,\alpha_{\hbar}^{u},\alpha_{\iota}^{v}),
\end{equation}

where:
\begin{itemize}
    \item $p,q,r\geq 1$ represent the same as before,
    \item $\beta\in\{T,F\}$ represents if each channel is attached to only one instruction, that is, if there is a bijection between instructions and channels, then $\beta = T$, otherwise we have $\beta = F$,
    \item  $s\geq 1$ is the maximum weight of the arcs in the host graph,
    \item $t$ is the maximum $out-degree$ of each host in the host graph,
    \item $u\geq 0$ is the greatest loop in the host graph.
    \item $v\geq 0$ is the greatest loop in the instruction graph.
\end{itemize}

The rationale for why we chose these new ingredients will be clearly shown in the following sections; however, let us take a brief look at an introductory idea behind each new ingredient.

First, we believe that the instruction-channel graphs have something to say in the sense of computational power, that is why we fix the bijection that will show a new frontier of computational power when we combine it with other ingredients, for instance, in theorems~\ref{teo:hostGraph},~\ref{teo:InstGraph} and ~\ref{teo:SLIN**2}.

Secondly, fixing the directed graphs as trees (when $u$ or $v$ are zero) is a first intention to approach the topology structure of the graphs, as they allow us characterisations of finite sets (such as Corollary~\ref{cor:TreeHG} or Theorem~\ref{teo:hostGraph}). Nevertheless, we believe that not only whether or not there is a loop is important, but also the size of the loop. The relation between these sizes and the power of the devices will have interesting results, for instance, Theorems 
~\ref{teo:InstGraph},~\ref{teo:SLIN**2},~\ref{teo:SLIN2*2}, and~\ref{teo:NRE}, and Lemma~\ref{lem:2*2}. Note that $u=1$ is the same as $u=0$ as there are no self-arcs in the host graph.

Thirdly, the weight of the arcs is the unique way to increase the amount of viruses, fixing this weight to one makes the power falls substantially (only finite sets as seen in Theorem~\ref{teo:NFINhost}), but if it is $2$ we can get novel universality results (see Theorem~\ref{teo:NRE}).

Lastly, the out-degree of the hosts, we believe that it is also crucial in the restrictions of the computing power, it will be shown in combination with other ingredients (see all the mentioned theorems and lemmas).

\subsection{Finite sets}

\begin{proposition}\label{prop:FINHostGraph}
    If a VM generates an infinite set of natural numbers, then its host graph has at least a cycle.
\end{proposition}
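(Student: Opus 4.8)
The plan is to prove the contrapositive: if the host graph $D_H$ has no cycle (i.e.\ it is a tree in the sense of the paper), then the VM generates only finitely many numbers. The key is to bound, uniformly over \emph{all} computations and all nondeterministic choices, the number of viruses that can ever accumulate in the output region $h_{out}$.

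First I would exploit acyclicity to assign to every node a finite ``yield'' by processing vertices in reverse topological order. Set $f(h_{out}) = 1$ (the environment is a sink, having no outgoing channels) and, for every host $h_j \in H$, define
$$ f(h_j) = \max\{\, w_H(h_j,h_{j'}) \cdot f(h_{j'}) \; : \; (h_j,h_{j'}) \in E_H \,\}, $$
with the convention $f(h_j) = 0$ when $h_j$ has no outgoing channel. Because $D_H$ is finite and acyclic, this recursion is well founded and each $f(h_j)$ is a well-defined natural number; this is precisely the step where acyclicity is essential, since a cycle of total weight exceeding $1$ would make the recursion diverge.

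Second I would define the potential of a configuration $\mathcal{C}_t = (a_{1,t},\dots,a_{p,t},u_t,a_{0,t})$ by
$$ \Phi(\mathcal{C}_t) = a_{0,t} + \sum_{j=1}^{p} a_{j,t}\, f(h_j), $$
and show that $\Phi$ is non-increasing along every transition. The only event that alters virus counts is a transmission along an open channel $(h_j,h_{j'})$ of weight $w$: it removes one virus from $h_j$ and adds $w$ to $h_{j'}$, changing $\Phi$ by $-f(h_j) + w\,f(h_{j'}) \le 0$ by the very definition of $f(h_j)$ as a maximum over outgoing channels. The no-transmission case leaves all counts unchanged. Hence $a_{0,t} \le \Phi(\mathcal{C}_t) \le \Phi(\mathcal{C}_0) = \sum_{j=1}^p n_j\, f(h_j) =: B$, a finite bound independent of $t$.

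Therefore every halting computation outputs a value in $\{0,1,\dots,B\}$, so the generated set is finite; contrapositively, an infinite generated set forces a cycle in $D_H$. The main obstacle to handle carefully is the well-definedness of $f$ together with the correct treatment of $h_{out}$ and of sink hosts in the recursion, and in particular confirming that the environment has no outgoing channels so that $f(h_{out}) = 1$ legitimately bounds the output; once the potential $\Phi$ is in place, the monotonicity check reduces to the one-line inequality above.
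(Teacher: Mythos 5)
Your proof is correct, but it takes a genuinely different route from the paper's. The paper argues extremally: it claims that the worst case for an acyclic host graph is a chain $h_1 \rightarrow h_2 \rightarrow \cdots \rightarrow h_p \rightarrow h_{out}$ (each host with out-degree one, viruses always routed through the maximum-weight channel), and then reads off the explicit bound $\sum_{i=1}^{p}\bigl(\prod_{j=i}^{p}w_j\bigr)\cdot n_i$ for that chain. You replace this extremal reduction with an invariant argument: the yield function $f$ defined by reverse topological order, and the potential $\Phi$ that is non-increasing under every transition, bound the output uniformly over all computations and all nondeterministic choices at once. What your approach buys is rigor and generality: the paper's reduction steps (why out-degree one maximizes the output, why greedy maximum-weight routing is worst) are asserted rather than proved, whereas your monotonicity check is the single inequality $-f(h_j)+w\,f(h_{j'})\le 0$, valid for every DAG shape. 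The two proofs also agree quantitatively: on the chain, your $f(h_i)$ equals $\prod_{j=i}^{p}w_j$, so your bound $B=\sum_{j=1}^{p}n_j f(h_j)$ specializes exactly to the paper's expression. The one hypothesis you rightly flag --- that the environment is a sink, so $f(h_{out})=1$ is legitimate --- holds by convention in this model (the semantics only ever opens channels whose source is a host), so your argument goes through as stated.
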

\begin{proof}
    Let $\Pi$ a VM of degree $(p,q)$ whose host graph is acyclic and where $n_1,\dots, n_p\in\N$ is the initial number of hosts. Let us see that the greatest number that can be generated is bounded.

    First, remark that as there are no cycles, the host graph is a tree, to generate the greatest number, we will always choose to transmit the viruses through the maximum weight channel. This idea is shown in Figure~\ref{fig:schTreeHost}.

    \begin{figure}[ht]
        \centering
        \includegraphics[width = 0.75\linewidth]{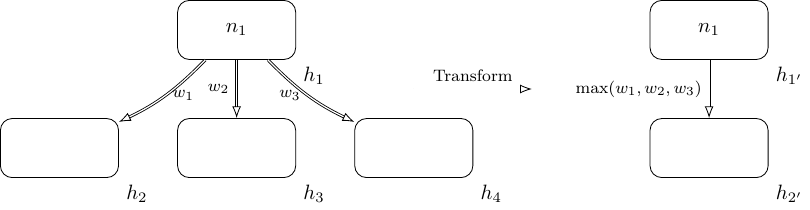}
        \caption{Scheme of the reduction in the host graph.}\label{fig:schTreeHost}
    \end{figure}

    Note that in order to obtain more replication with the fixed amount of hosts, that is, to maximise the depth, the out-degree of each host will be fixed to $1$. Thus, the host graph will have the structure shown in Figure~\ref{ffig:finalHost}.

    \begin{figure}[ht]
        \centering
        \includegraphics[width = 0.75\linewidth]{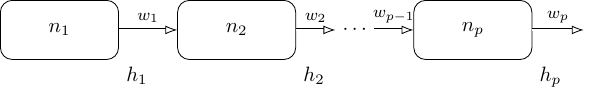}
        \caption{The host graph of $\Pi$.}\label{ffig:finalHost}
    \end{figure}

    Thus, the greatest number of viruses that can be generated is sending all viruses from host $h_1$ to host $h_2$, and then all of those viruses to host $h_3$ and so on until $h_{out}$. The number of viruses that reach the output region is:
    $$\sum_{i=1}^{p}((\prod_{j=i}^{p}w_j)\cdot n_i).$$

    In summary, the greatest number that can be generated is bounded, and thus the set generated is finite.
\end{proof}

\begin{corollary}\label{cor:TreeHG}
    $\mathrm{NVM}_F(h_*,i_*, nvh_*, wc_*,outd_*,\alpha^{\hbar}_*,\alpha^{\iota}_0) \subseteq \mathrm{NFIN}.$
\end{corollary}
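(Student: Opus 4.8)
The plan is to exploit the hypothesis $\alpha^{\iota}_0$, which forces the instruction graph to contain no loops, so that every computation is necessarily short. Since by the Invariance proposition we may already assume the instruction graph is a rooted tree with root $i_1$, the additional condition $v=0$ makes it a genuinely acyclic tree. I would first record the consequence that matters: along any computation the activated instruction strictly descends the tree, because the next instruction is always reached along an outgoing arc and no arc can ever return to an already-visited instruction. Hence every computation, halting or not, activates each instruction at most once and terminates after at most $d$ transition steps, where $d$ is the depth of the instruction tree (in particular $d \le q$, which is a fixed finite number for any given machine).

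Next I would turn the bound on the number of steps into a bound on the output. In each transition step exactly one instruction is activated, it opens the at most one channel attached to it (recall that each instruction is attached to only one channel at most, independently of $\beta$), and at most one virus is transmitted, replicated by the weight of that channel. Writing $W$ for the maximum arc weight $wc_s$ of the fixed machine, each step therefore deposits at most $W$ viruses into the environment $h_0$. Combining this with the previous paragraph, the number of viruses in $h_0$ at a halting configuration is at most $d\cdot W$, a fixed finite quantity independent of which nondeterministic choices are made.

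I would then conclude that every number generated lies in $\{0,1,\dots,d\cdot W\}$, so the generated set is finite and belongs to $\mathrm{NFIN}$; since the machine was an arbitrary member of the family on the left-hand side, the inclusion follows. I would emphasise that neither the number of hosts, nor the number of viruses per host, nor the weight, nor the out-degree, nor the size of loops in the host graph ($\alpha^{\hbar}_*$) plays any role beyond contributing the finite constant $W$: the entire argument rests on the acyclicity of the instruction graph alone.

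The main obstacle I anticipate is conceptual rather than computational: one must be careful that this corollary is \emph{not} an instance of Proposition~\ref{prop:FINHostGraph}, because here the host graph is explicitly allowed to contain arbitrarily large cycles ($\alpha^{\hbar}_*$). The finiteness therefore cannot come from bounding replication depth in the host graph as in that proposition; it must come entirely from bounding the \emph{length} of the computation via the acyclic instruction graph. A secondary point requiring care is confirming that unbounded weight and unbounded viruses per host do no harm: for each individual fixed machine these are concrete finite numbers, so the product $d\cdot W$ is finite even though it is not uniformly bounded across the whole family.
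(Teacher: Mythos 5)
Your proof is correct for the statement as written, but it is worth spelling out how it relates to what the paper actually does. The paper gives no explicit proof of this corollary: it places it immediately after Proposition~\ref{prop:FINHostGraph} (acyclic \emph{host} graph implies bounded output, via the bound $\sum_{i=1}^{p}(\prod_{j=i}^{p}w_j)\cdot n_i$), intending it as a direct consequence. That derivation, however, only matches a statement with loop parameters $\alpha^{\hbar}_0,\alpha^{\iota}_*$, whereas the corollary as printed has $\alpha^{\hbar}_*,\alpha^{\iota}_0$; together with the label ``TreeHG'' and the mirror-image situation at Corollary~\ref{coro:treeIG} (printed with $\alpha^{\hbar}_0,\alpha^{\iota}_*$ but proved from Proposition~\ref{prop_tree}, which concerns the \emph{instruction} graph), this strongly suggests that the two loop parameters were accidentally swapped in both corollaries. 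Your argument --- acyclicity of the instruction graph forces every computation to follow a path in a finite rooted tree, hence to halt within $d$ steps, each step sending at most $W$ viruses to the environment, so every output lies in $\{0,\dots,d\cdot W\}$ --- is exactly the reasoning the paper itself uses for Proposition~\ref{prop_tree} and Corollary~\ref{coro:treeIG}, transplanted to the statement under review, and it is the only argument that can work here, since the host graph is allowed arbitrary cycles. So your closing observation that Proposition~\ref{prop:FINHostGraph} is inapplicable is not merely a caution but the crux: the two routes are genuinely different (the paper's bounds total replication through an acyclic host graph no matter how long the computation runs; yours bounds the length of the computation no matter what the host topology is), and only yours proves the inclusion as literally stated.
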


With finite sets, there are some interesting results; you get the characterisation as $wc_1$ limits to a finite number of viruses.

\begin{theorem}\label{teo:hostGraph}
    The following family of sets are equal to $\mathrm{NFIN}$:
    \begin{enumerate}
        \item\label{teo:NFINhost} $\mathrm{NVM}_{F}(h_1,i_*,nvh_*,wc_s,outd_t,\alpha^{\hbar}_{0},\alpha^{\iota}_{0}),$ for each $s,t\geq 1$.
        \item\label{teo:NFINnumhost} $\mathrm{NVM}_{T}(h_*,i_*,nvh_r,wc_*,outd_t,\alpha^{\hbar}_{0},\alpha^{\iota}_{0}),$ for each $r,t\geq 1$.
    \end{enumerate}
\end{theorem}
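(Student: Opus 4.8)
The plan is to prove each of the two equalities by double inclusion, observing first that the inclusions $\subseteq\mathrm{NFIN}$ are essentially already in hand. For item~\ref{teo:NFINhost}, the class is a subclass of the one in Corollary~\ref{cor:TreeHG} (all parameters are tightened and $\beta=F$ is unchanged, while the decisive constraint $\alpha^{\iota}_{0}$ is kept), so $\subseteq\mathrm{NFIN}$ is immediate. For item~\ref{teo:NFINnumhost}, every machine with $\beta=T$ is in particular a legal $\beta=F$ machine, hence that class also embeds into the class of Corollary~\ref{cor:TreeHG}, and $\subseteq\mathrm{NFIN}$ follows as well. Thus the real content is the reverse inclusion, and here I would exploit the ``for each $s,t$'' (resp. ``for each $r,t$'') quantifier by building a single witness that already respects the smallest admissible values of the bounded parameters: weight $1$ and out-degree $1$ for item~\ref{teo:NFINhost}, and one virus per host with out-degree $1$ for item~\ref{teo:NFINnumhost}. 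Since these values are $\leq$ any $s,t,r\geq 1$, the same machine witnesses membership for every choice of the bounds simultaneously.

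The key difficulty compared with the state-of-the-art constructions of Figures~\ref{fig:NFIN-host} and~\ref{fig:NFIN-numhost} is the constraint $\alpha^{\iota}_{0}$: the instruction graph must be a rooted tree, so no counting loop is available. I would replace the loop by nondeterministic branching, realised (as in the worked example $\Pi_{ex}$) by equal-weight outgoing edges. For item~\ref{teo:NFINhost}, to generate $F=\{m_1,\dots,m_k\}$ I take the single host $h_1$ initialised with $\max_i m_i$ viruses and the sole channel $(h_1,h_0)$ of weight $1$; the root $i_1$ branches nondeterministically into $k$ disjoint paths, the $i$-th being a chain of $m_i$ instructions, all attached to $(h_1,h_0)$, terminating in a leaf. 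Choosing the $i$-th branch fires the channel exactly $m_i$ times and then halts, outputting $m_i$. This is a tree instruction graph, the host graph is the single acyclic arc $h_1\to h_0$ (so $\alpha^{\hbar}_{0}$ holds, with out-degree $1$ and weight $1$), and reusing one channel across many instructions is exactly why $\beta=F$ is needed here.

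Item~\ref{teo:NFINnumhost} is the dual construction, trading viruses for weight under the bijection constraint $\beta=T$. Now each instruction may own at most one channel and no channel may be shared, so I use $k$ hosts $h^{(1)},\dots,h^{(k)}$, each holding a single virus, and $k$ dedicated channels $(h^{(i)},h_0)$ of weight $m_i$. The root $i_1$ again branches nondeterministically into $k$ leaves, the $i$-th attached to $(h^{(i)},h_0)$; firing it sends the one virus of $h^{(i)}$ to the environment replicated $m_i$ times and halts, outputting $m_i$. Here no host ever exceeds one virus, every host has out-degree $1$, both graphs are trees, and the channel/instruction correspondence is a bijection, so $\beta=T$ holds; the price is that the weights $m_i$ are unbounded, matching $wc_*$.

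The main obstacle is therefore confined to the reverse inclusions: engineering these loop-free witnesses so that all the parameters are honoured at once, in particular reconciling the tree instruction graph with the branching needed to reach every target value, and respecting $\beta$ (channel reuse for item~\ref{teo:NFINhost}, channel privacy for item~\ref{teo:NFINnumhost}). Two edge cases must be dispatched: the value $0\in F$ is produced by a branch ending in a leaf attached to no environment channel (as in Theorem~\ref{teo:singlesets}), and the empty set---which requires a non-halting computation and is hence unattainable once $\alpha^{\iota}_{0}$ forbids loops---is treated according to the paper's convention for $\mathrm{NFIN}$. Correctness of each branch is a one-line invariant on the number of firings, entirely analogous to the invariant $\varphi$ used in Lemma~\ref{lem:2*2}, so no genuinely new verification machinery is required.
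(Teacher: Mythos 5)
Your proposal is correct and takes essentially the same approach as the paper: the paper also obtains $\subseteq\mathrm{NFIN}$ from Corollary~\ref{cor:TreeHG} and gets the reverse inclusion by observing that the known $\mathrm{NFIN}$ constructions of Figures~\ref{fig:NFIN-host} and~\ref{fig:NFIN-numhost} from \cite{GCRVM} already satisfy the new parameters (tree host/instruction graphs, weight and out-degree bounds, and $\beta=T$ for the second class), which are exactly the machines you rebuilt explicitly. The one wrinkle you flag --- that a tree instruction graph forces every computation to halt, so $\emptyset$ is unreachable --- is a caveat the paper's own proof silently shares, so it does not separate your argument from theirs.
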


\begin{proof}
    Separating by the two cases, we have:
    \begin{enumerate}
        \item The proof is followed by the previous corollary and the inclusions of~\cite{GCRVM}. In a simple glance, we can see in Figure~\ref{fig:NFIN-host} that both the instruction and the host graphs are trees. 
        \item On the other hand, in Figure~\ref{fig:NFIN-numhost} we can highlight that we have one channel associated with only one instruction, that is, $\beta = T$.
    \end{enumerate}
\end{proof}

Now we move to similar results with the instruction graph fixed as a tree. For this, the following result holds:

\begin{proposition}\label{prop_tree}
    If the instruction graph is a tree, then all computations halt. In addition, the number of transition steps is bounded by the depth of the tree.
\end{proposition}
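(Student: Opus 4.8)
The plan is to read off from the semantics that the run of a virus machine is, at the level of instructions, just a directed walk in $D_I$, and then to use acyclicity to turn that walk into a simple path whose length is controlled by the depth.

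First I would make the elementary dynamical observation precise. From any configuration $\C_t$ with $u_t\in I$, the successor instruction $u_{t+1}$ is obtained by leaving $u_t$ along a \emph{single} out-edge of $D_I$: the maximum-weight out-edge when a transmission occurs at $u_t$, and the minimum-weight out-edge otherwise, with ties resolved nondeterministically, and with $u_{t+1}=\#$ precisely when $u_t$ has no out-edge. The key point is that in every branch and for every nondeterministic resolution, $u_{t+1}$ is an out-neighbour of $u_t$ in $D_I$. Consequently each computation induces a directed walk $i_1=u_0,u_1,u_2,\dots$ in $D_I$, which can only stop upon reaching an instruction of out-degree $0$.

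Next I would argue that, when $D_I$ is a tree (i.e.\ acyclic), this walk is a simple path. If some instruction were activated twice, say $u_s=u_{s'}$ with $s<s'$, then $u_s,u_{s+1},\dots,u_{s'}$ would be a closed directed walk of positive length, and any such walk contains a directed cycle --- contradicting acyclicity. Hence all the $u_t$ are pairwise distinct. Since $I$ is finite and, in a directed acyclic graph, every out-edge moves strictly forward in a topological order, the walk can never return to a visited vertex and so cannot be prolonged forever: after finitely many steps it reaches a vertex with no out-edge, at which point $u_{t+1}=\#$ and the computation halts. This gives the first assertion. For the second, the walk is a simple directed path through distinct instructions, so the number of edges it traverses is at most the length of the longest path in $D_I$, that is, the depth; as each transition step consumes exactly one such edge, the total number of steps is bounded by the depth of the tree.

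I expect the only delicate point to be fixing the reading of the phrase ``follow the highest (resp.\ least) weight path \dots\ to just the next instruction'' as the traversal of a single edge, so that the instruction trace is genuinely a walk in $D_I$ rather than a multi-edge jump; once this is settled, the remainder rests on the standard fact that an acyclic digraph admits no closed walk of positive length, and the bound on the number of steps follows at once from the finiteness of a simple path. Note that the standing convention that every instruction graph is a rooted tree with root $i_1$ makes $i_1=u_0$ the natural start of the path, so the relevant longest path is exactly one emanating from the root.
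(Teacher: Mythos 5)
Your proof is correct. Note that the paper itself gives \emph{no} proof of Proposition~\ref{prop_tree} (it is stated as evident and used only to derive Corollary~\ref{coro:treeIG}), so your argument --- each computation induces a directed walk in $D_I$, acyclicity forbids any repeated instruction since a closed walk of positive length would contain a directed cycle, and finiteness of $I$ then forces the walk to end at a vertex of out-degree zero, where the next ``instruction'' is $\#$ --- is precisely the justification the authors leave implicit. One bookkeeping caveat: your sentence ``each transition step consumes exactly one such edge'' is off by one, because the final transition (activating the last instruction, which has no out-edge) produces the halting configuration without traversing any edge of $D_I$; the number of transition steps therefore equals the number of \emph{vertices} of the induced path, not the number of edges. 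This is harmless given the paper's conventions: the depth is defined there as ``the longest path of the tree'' and a path is given as a tuple of vertices, so measuring length in vertices the bound ``number of steps $\leq$ depth'' still holds, and the only use of the proposition requires mere finiteness; but you should state explicitly which convention (vertices versus edges) you are using when you invoke the depth.
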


\begin{corollary}\label{coro:treeIG}
    $\mathrm{NVM}_F(h_*,i_*, nvh_*, wc_*,outd_*,\alpha^{\hbar}_0,\alpha^{\iota}_*) \subseteq \mathrm{NFIN}$
\end{corollary}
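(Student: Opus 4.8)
The plan is to obtain Corollary~\ref{coro:treeIG} directly from Proposition~\ref{prop:FINHostGraph}, reading the parameter $\alpha^{\hbar}_0$ as the statement that the host graph contains no cycle. By definition $\alpha^{\hbar}_0$ forces the greatest loop in the host graph to have length $0$, so the host graph is acyclic, which is exactly the hypothesis covered by the contrapositive of Proposition~\ref{prop:FINHostGraph}. Since that proposition is stated for an arbitrary VM of degree $(p,q)$ and imposes no condition on $\beta$, nor on the number of hosts, instructions, viruses per host, arc weights, or host out-degree, none of the unbounded parameters $h_*, i_*, nvh_*, wc_*, outd_*$ nor the choice $\beta=F$ create any obstruction. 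In particular, the tree-depth argument used for the instruction-graph case (Corollary~\ref{cor:TreeHG}, via Proposition~\ref{prop_tree}) is \emph{not} available here, because $\alpha^{\iota}_*$ permits arbitrarily large loops in the instruction graph and hence computations of unbounded length.

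First I would recall the quantitative content of the proof of Proposition~\ref{prop:FINHostGraph}: when the host graph is acyclic, the number of viruses that can ever reach the output region is bounded above by a fixed quantity of the form $\sum_{i=1}^{p}\bigl(\prod_{j=i}^{p}w_j\bigr)n_i$, determined solely by the initial virus counts $n_1,\dots,n_p$ and the channel weights. The essential observation, and the point that distinguishes this corollary from the tree-instruction-graph situation, is that this bound makes no reference to the instruction graph at all. Consequently, even when $\alpha^{\iota}_*$ allows arbitrarily large loops in the instruction graph (hence computations of unbounded length, or non-halting computations), the total number of viruses that can accumulate in $h_{out}$ remains at most this machine-dependent constant, call it $B$.

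To make the independence from the instruction scheduling fully rigorous, I would phrase the bound as a monotonicity invariant. Assign to each host $h_i$ its maximal downstream replication weight toward the output, which is finite precisely because the host graph is acyclic; define the potential of a configuration as the weighted sum of its hosts' virus counts against these weights, plus the current output count. Each transition step either performs one transmission along a channel, weakly decreasing the hosts' contribution while increasing the output by at most the matching amount, or performs no transmission. Hence the potential never exceeds its initial value $B$, independently of which instruction is activated at each step and of whether the instruction graph loops. This confirms that every halting computation outputs a value in $\{0,1,\dots,B\}$.

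Finally I would conclude that the set of numbers generated by any such VM is a subset of the finite set $\{0,1,\dots,B\}$ and is therefore finite, yielding the inclusion in $\mathrm{NFIN}$. The only delicate step is the invariance argument of the preceding paragraph, which guarantees that instruction-graph loops cannot drive the output past the acyclic bound; everything else is an immediate appeal to Proposition~\ref{prop:FINHostGraph}.
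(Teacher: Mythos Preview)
Your argument is correct for the statement as written: reading $\alpha^{\hbar}_0$ literally as ``the host graph is acyclic'' and invoking Proposition~\ref{prop:FINHostGraph}, supplemented by your potential-function invariant to ensure that instruction-graph loops cannot push the output past the acyclic bound, yields the inclusion in $\mathrm{NFIN}$.

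The paper, however, proves this corollary by a different route: it appeals to Proposition~\ref{prop_tree}, which says that when the \emph{instruction} graph is a tree every computation halts in at most depth-many steps, and concludes finiteness from the bounded halting time. That argument requires $\alpha^{\iota}_0$, not $\alpha^{\hbar}_0$. In fact the statements of Corollaries~\ref{cor:TreeHG} and~\ref{coro:treeIG} appear to have their sixth and seventh parameters swapped relative to their labels, their placement in the text (the sentence ``Now we move to similar results with the instruction graph fixed as a tree'' immediately precedes Proposition~\ref{prop_tree} and Corollary~\ref{coro:treeIG}), and their proofs. As you yourself observed, Proposition~\ref{prop_tree} is unavailable when $\alpha^{\iota}_*$ is unbounded. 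So your proof matches the displayed statement better than the paper's own proof does; the divergence reflects a typo in the paper rather than any gap in your reasoning.
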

\begin{proof}
    The inclusion is direct by Proposition~\ref{prop_tree}, as all computations halt, then every virus machine halts in a finite number of steps, thus the set of numbers that can be generated is finite.
\end{proof}

\begin{theorem}\label{teo:InstGraph}
    The following family of sets are equal to $\mathrm{NFIN}$:
    \begin{enumerate}
        \item $\mathrm{NVM}_{F}(h_1,i_*,nvh_*,wc_s,outd_t,\alpha^{\hbar}_{0},\alpha^{\iota}_{0}),$ for each $s,t\geq 1$.
        \item $\mathrm{NVM}_{T}(h_*,i_*,nvh_r,wc_*,outd_t,\alpha^{\hbar}_{0},\alpha^{\iota}_{0}),$ for each $r,t\geq 1$.
        \item $\mathrm{NVM}_{F}(h_p,i_*,nvh_r,wc_s,outd_t,\alpha^{\hbar}_{u},\alpha^{\iota}_{0}),$ for each $p,r,s,t,\\u\geq 2$.
    \end{enumerate}
\end{theorem}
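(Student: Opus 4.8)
The plan is to handle the three families separately and to observe that only item~3 carries genuinely new content. Items~1 and~2 are literally the two families already characterised in Theorem~\ref{teo:hostGraph}, so for them I would simply invoke that theorem. The novelty of item~3 is that the host graph is now permitted to contain cycles of size up to $u\geq 2$, while the instruction graph is still forced to be a tree ($\alpha^{\iota}_0$); I would establish the equality with $\mathrm{NFIN}$ by double inclusion, neither direction requiring a new idea beyond results already proved.

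For the inclusion $\mathrm{NVM}_{F}(h_p,i_*,nvh_r,wc_s,outd_t,\alpha^{\hbar}_{u},\alpha^{\iota}_{0})\subseteq\mathrm{NFIN}$ I would appeal directly to Corollary~\ref{coro:treeIG}. The constraint $\alpha^{\iota}_0$ forces the instruction graph to be acyclic, so by Proposition~\ref{prop_tree} every computation of a fixed machine halts within $\mathrm{depth}(D_I)$ steps. Since the instruction graph of a fixed machine is finite, this bound is finite, hence only finitely many outputs can occur and the generated set is finite. Notably, this direction uses none of the remaining bounds $p,r,s,t,u$ and needs no construction.

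For the reverse inclusion I would not build a fresh machine but instead verify that the witness $\Pi$ of Lemma~\ref{lem:2*2} already meets every structural bound of item~3 simultaneously. The key step is the checklist: $\Pi$ uses $2$ hosts, at most $2$ viruses per host, host weights in $\{1,2\}$, host out-degree $2$, and a single host cycle $h_1\to h_2\to h_1$ of size $2$; its instruction arcs all either increase the index or point to $i_{2m_k}$, so $D_I$ is acyclic and $\alpha^{\iota}_0$ holds; and since $\beta=F$ imposes no bijection, its instruction-channel graph is admissible. Hence $\Pi$ lies in the family at the minimal values $p=r=s=t=u=2$, and a fortiori for all $p,r,s,t,u\geq 2$. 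As Lemma~\ref{lem:2*2} realises every nonempty finite set of positive integers, I would finish by adjoining, when needed, a leaf branch from $i_1$ that halts with empty environment to capture sets containing $0$, inheriting the treatment of the remaining degenerate case from Theorem~\ref{teo:hostGraph}; these additions keep $D_I$ a tree and leave all bounds untouched.

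The main obstacle is bookkeeping rather than a deep argument: one must confirm that tolerating a host loop of size $u\geq 2$ does not smuggle any cycle into the instruction graph, and that the out-degree, weight and virus bounds all hold at the value $2$ at once. Because the witness of Lemma~\ref{lem:2*2} already satisfies all of these constraints together, the theorem reduces to combining that lemma (for $\supseteq$) with Corollary~\ref{coro:treeIG} (for $\subseteq$).
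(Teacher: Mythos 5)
Your proposal is correct and follows essentially the same route as the paper: items~1 and~2 are disposed of by Theorem~\ref{teo:hostGraph}, and item~3 is obtained by combining Corollary~\ref{coro:treeIG} (for the inclusion in $\mathrm{NFIN}$) with the witness machine of Lemma~\ref{lem:2*2} (for the reverse inclusion). In fact your write-up is more careful than the paper's two-sentence proof, since you explicitly verify that the Lemma~\ref{lem:2*2} machine meets all the bounds $p,r,s,t,u\geq 2$ with an acyclic instruction graph and $\beta=F$, and you flag the degenerate cases (sets containing $0$) that the paper leaves implicit.
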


\begin{proof}
    The proof of the first two statements is analogous to the proof of Theorem~\ref{teo:hostGraph}. For the third statement, it can be proved by applying the Lemma~\ref{lem:2*2} and the Corollary~\ref{coro:treeIG}.
\end{proof}

It is interesting to note that now with $v = 0$, we have gone from strict inclusion in Proposition~\ref{prop:2*2} to characterisation in the last statement in Theorem~\ref{teo:InstGraph}.

\subsection{Semilinear sets}

The authors in~\cite{computing-VM} characterise semilinear sets by VMs in generating mode, that is, $SLIN = (*,*,r)$, for all $r\geq 2$. Reviewing how the proof was constructed, we can assure the following result with new ingredients:

\begin{theorem}\label{teo:SLIN**2}
    $\mathrm{SLIN} = \mathrm{NVM}_{T}(h_*,i_*, nvh_r,wc_*,outd_t,\alpha^{\hbar}_{u},\alpha^{\iota}_{v})$\\ for all $r,t,u\geq2$ and $v\geq 3$.
\end{theorem}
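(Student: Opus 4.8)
The claim is a characterization $\mathrm{SLIN} = \mathrm{NVM}_{T}(h_*,i_*,nvh_r,wc_*,outd_t,\alpha^{\hbar}_u,\alpha^{\iota}_v)$ for $r,t,u \geq 2$ and $v \geq 3$, where the new-ingredient notation pins down the topology of the two graphs. Since $\mathrm{SLIN} = \mathrm{NVM}(*,*,2)$ is already known from \cite{computing-VM}, the natural route is not to reprove semilinearity from scratch but to \emph{reconcile the new bounds with the existing proof}. I would prove the two inclusions separately, and for each I would lean on the construction already in \cite{computing-VM} rather than building new machines.

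\medskip

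\noindent\textbf{Inclusion $\mathrm{SLIN} \subseteq \mathrm{NVM}_T(\dots)$.} First I would inspect the VMs constructed in \cite{computing-VM} that witness $\mathrm{SLIN} \subseteq \mathrm{NVM}(*,*,2)$ and simply read off the values of the five new parameters on those machines. The point of the theorem is that those witnessing machines \emph{already} satisfy the stronger structural constraints: the number of viruses per host is at most $2$ (hence $nvh_r$ with $r\ge 2$), the channel–instruction graph is a bijection (hence the subscript $T$), the weights and out-degrees are whatever they are (unbounded, matching $wc_*$ and $outd_t$ with $t\ge 2$), and crucially the largest host-graph loop is at most $u$ and the largest instruction-graph loop is at most $v$. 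So the main verification is that the loops appearing in the canonical semilinear generators have host-loop size $\le 2$ and instruction-loop size $\le 3$. A semilinear set is a finite union of linear sets $\{a + \sum_j \lambda_j b_j\}$, and the standard VM realization builds each base period $b_j$ by an iterated transmission loop; I would check that each such loop is realized with a host cycle of length $2$ (a pair of hosts exchanging a virus) and an instruction cycle of length $3$. If the existing construction uses a larger instruction loop I would rewrite it to fit within size $3$, which is exactly what the generous bound $v\ge 3$ is meant to accommodate.

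\medskip

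\noindent\textbf{Inclusion $\mathrm{NVM}_T(\dots) \subseteq \mathrm{SLIN}$.} Here I would argue that the structural restrictions keep us inside $\mathrm{NVM}(*,*,2)$, so semilinearity follows from \cite{computing-VM}. The decisive constraint is $nvh_r$ together with the $wc_*$/loop bookkeeping: the subscript $r$ bounds the number of viruses in each host at every instant. The known characterization $\mathrm{SLIN} = \mathrm{NVM}(*,*,2)$ is stated for $r = 2$, but the correct reading is $r \ge 2$: any machine whose per-host virus count stays bounded by a constant can be simulated by one with the count bounded by $2$ (encoding a bounded counter across a bounded number of extra hosts), so a uniform per-host bound already forces the generated set into $\mathrm{SLIN}$. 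Thus I would show that a machine meeting the stated profile has a \emph{uniformly bounded} per-host virus count — this is where $nvh_r$, finite weights, and the bounded loop sizes $u, v$ cooperate — and then invoke $\mathrm{NVM}(*,*,\text{bounded}) \subseteq \mathrm{SLIN}$.

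\medskip

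\noindent\textbf{Main obstacle.} The hard part will be the $\subseteq$ direction, specifically justifying that the bound $r$ on viruses-per-host is genuinely equivalent (up to the known characterization) to the hypothesis $r=2$ used in \cite{computing-VM}, and that the topological side-conditions $\alpha^{\hbar}_u, \alpha^{\iota}_v$ do not secretly allow the virus count to blow up. A loop in the host graph together with weight-$\ge 2$ channels is exactly the mechanism that produces exponential growth (compare Theorem~\ref{teo:NRE}); so I must verify that the \emph{combination} of a bounded per-host count $nvh_r$ \emph{with} the loop-size caps prevents unbounded accumulation, keeping each configuration describable by finitely many states and hence the output semilinear. Getting this interplay precise — rather than the routine parameter-reading in the $\supseteq$ direction — is the crux of the argument.
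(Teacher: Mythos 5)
Your overall route --- reuse the known characterization $\mathrm{SLIN}=\mathrm{NVM}(*,*,r)$ from \cite{computing-VM} and reconcile it with the new parameters rather than reprove semilinearity from scratch --- is exactly the paper's approach; the paper in fact gives no detailed proof at all, only the remark that ``reviewing how the proof was constructed'' in \cite{computing-VM} shows the witnessing machines already satisfy the new ingredients. Your handling of the direction $\mathrm{SLIN}\subseteq\mathrm{NVM}_T(\dots)$ (inspect the canonical generators, check the channel--instruction bijection, host loops of size $2$, instruction loops of size at most $3$, rewriting a loop if it is too long) is precisely what the theorem intends, and it is the only direction with genuine content.

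However, you have inverted where the difficulty lies, and the step you call the crux rests on a misreading of the definitions. The parameter $nvh_r$ is not something to be \emph{derived} from the topology: by the definition of the class, every machine counted in $\mathrm{NVM}_T(h_*,i_*,nvh_r,wc_*,outd_t,\alpha^{\hbar}_u,\alpha^{\iota}_v)$ keeps at most $r$ viruses in each host at every instant of every computation. Hence this class is by definition a subclass of $\mathrm{NVM}(*,*,r)$, and the inclusion into $\mathrm{SLIN}$ is immediate from the cited characterization, which the paper states for every $r\geq 2$, not only $r=2$. There is no ``interplay'' between $nvh_r$, the channel weights, and the loop caps to verify: a computation in which some host exceeded $r$ viruses would simply mean the machine is not in the class. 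Moreover, the parenthetical reduction your argument leans on --- that any machine with per-host count bounded by a constant can be simulated by one with count bounded by $2$ by ``encoding a bounded counter across extra hosts'' --- is both unnecessary and unproven; it is not at all clear how to split a host holding $r$ viruses into hosts holding at most $2$ while preserving the replication-weight semantics and the instruction graph. If you genuinely had to pass from $r=2$ to arbitrary fixed $r$, the correct tool is the one the paper invokes in the proof of Theorem~\ref{teo:SLIN2*2}: a bounded per-host count yields finitely many configurations, so the device behaves like a right-linear grammar, giving semilinearity directly for any fixed $r$.
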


The question that arises is can we get another trade-off in the unbounded ingredients? In this section, we will prove that we can by the demonstration of the following theorem:

\begin{theorem}\label{teo:SLIN2*2}
    $\mathrm{SLIN} = \mathrm{NVM}_{F}(h_p,i_*, nvh_r,wc_s,outd_t,\alpha^{\hbar}_{u},\alpha^{\iota}_{*})$\\ for each $p,r,s,t,u\geq2$.
\end{theorem}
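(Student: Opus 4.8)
The plan is to prove the equality by double inclusion, and to exploit the fact that the family $\mathrm{NVM}_{F}(h_p,i_*,nvh_r,wc_s,outd_t,\alpha^{\hbar}_u,\alpha^{\iota}_*)$ is monotone nondecreasing in each ceiling, so that it suffices to establish one inclusion at the minimal admissible values $p=r=s=t=u=2$ and the other for arbitrary values $\geq 2$. The inclusion $\mathrm{NVM}_{F}(h_p,i_*,nvh_r,wc_s,outd_t,\alpha^{\hbar}_u,\alpha^{\iota}_*)\subseteq\mathrm{SLIN}$ is the routine direction: every machine counted on the right uses at most $p$ hosts and never stores more than $r$ viruses in any host, hence it belongs in particular to the old family $\mathrm{NVM}(p,*,r)\subseteq\mathrm{NVM}(*,*,r)$; since $r\geq 2$, the characterisation $\mathrm{SLIN}=\mathrm{NVM}(*,*,r)$ of~\cite{computing-VM} gives the claim. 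The extra ceilings $wc_s$, $outd_t$, the bounded host loop $\alpha^{\hbar}_u$, and the relaxation $\beta=F$ only shrink the family further, so they cannot carry us outside $\mathrm{SLIN}$.

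For the reverse inclusion I would give an explicit construction realising every semilinear set on a machine meeting all stated ceilings simultaneously, with the minimal values $p=r=s=t=u=2$. Write the target set in standard normal form $S=\bigcup_{i=1}^{m}L_i$, where each $L_i=\{b_i+\sum_{j=1}^{k_i} n_{ij}p_{ij} : n_{ij}\geq 0\}$ is a linear set with base $b_i$ and periods $p_{ij}$ (the empty set and finite sets arising as degenerate linear sets). The core building block is the two-host gadget already used for $\Pi_{Nat}$ in the proof of Proposition~\ref{prop:2*2}: the pair $(h_1,h_2)$ joined by the size-$2$ host loop with weight-$2$ channels recycles a single virus indefinitely while ``tapping'' a controlled number of copies into $h_{out}$, keeping at most two viruses in each host. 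Out of this gadget I would assemble two parametric routines: (i) a deterministic chain that adds a fixed constant $c$ to the environment using $\lceil c/s\rceil$ taps, so that large bases and periods are decomposed into steps of weight at most $s$; and (ii) a nondeterministic instruction loop that repeats the ``$+p_{ij}$'' chain an arbitrary number of times. Concatenating, for each $L_i$, the base routine followed by one such loop per period, and prefixing a top-level nondeterministic branch in the instruction graph that selects which $L_i$ to run, yields a machine generating exactly $S$.

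It then remains to check the resource accounting and to verify correctness by invariants. Only the two hosts $h_1,h_2$ ever appear; each host has out-degree $2$ (one channel to its partner, one to $h_{out}$); the largest host loop has size $2$; and weights never exceed $s$. Because the same two channels are shared by the many instructions of the various routines, the construction genuinely requires $\beta=F$ rather than the bijective $\beta=T$ of Theorem~\ref{teo:SLIN**2}, which is precisely the trade-off announced in the statement. Correctness would follow from loop invariants of the same shape as $\varphi(k)\equiv\C_{3k}=(1,0,i_1,k)$ in Proposition~\ref{prop:2*2}, tracking the environment count and verifying that each host returns to its canonical two-virus configuration at the entry and exit of every routine. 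The main obstacle I anticipate is exactly this bookkeeping at the seams: guaranteeing that the ping-pong invariant is preserved when control passes from the base routine into a period loop and from one period loop to the next, and that the nondeterministic exit points are placed so that the generated set is precisely $S$, with neither spurious values nor omissions, all while decomposing arbitrarily large bases and periods into weight-at-most-$s$ taps without ever exceeding $r=2$ viruses per host.
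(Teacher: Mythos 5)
Your proposal is correct and follows essentially the same route as the paper: the inclusion $\mathrm{NVM}_{F}(h_p,i_*,nvh_r,wc_s,outd_t,\alpha^{\hbar}_{u},\alpha^{\iota}_{*})\subseteq \mathrm{SLIN}$ comes from the bounded-viruses-per-host argument of~\cite{computing-VM}, and the converse from an explicit two-host recycling construction (your $\Pi_{Nat}$-style gadget is precisely the paper's $\Pi_{arith}$ in Lemma~\ref{lem:arit}) with nondeterministic instruction loops realising the periods and a nondeterministic branch realising the union (the paper's Lemma~\ref{lem:union}). The only difference is organisational: the paper decomposes semilinear sets into single-period arithmetic progressions and proves closure under union by induction, whereas you chain one loop per period of a multi-period linear set and select the linear component by one top-level branch; the gadget, the invariant-based verification, and the resource accounting ($p=r=s=t=u=2$, $\beta=F$) coincide.
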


\begin{proof}
    \boxed{\supseteq} This part of the proof is equal to the technique applied by the authors in~\cite{computing-VM}. That is, simulating the right-linear grammar, this is possible due to the amount of viruses of each host at each moment of the computation being bounded, and thus the number of possible configurations is finite. We still are in the same conditions, so the same proof can be made.
    
    \boxed{\subseteq} For simplicity, this part of the proof has been divided into two lemmas. Applying Lemma~\ref{lem:arit}, the arithmetic progressions are generated by this family of VMs. Lastly, Lemma~\ref{lem:union} shows the closure under union. Thus, the inclusion is formally proved.
    
\end{proof}

\begin{lemma}[Arithmetic progression]\label{lem:arit}
    For each $n,r\geq 1,$ we have the following inclusion $\{n\cdot i + r\ |\ i\geq 1\}\in \mathrm{NVM}_{F}(h_2,i_{3(n+r)}, nvh_2,
    wc_2,outd_2,\alpha^{\hbar}_{2},\alpha^{\iota}_{*})$. More precisely, it will be generated by the virus machine $\Pi_{arith}$ of degree $(2,3n+3r)$ defined as:

    \begin{equation*}
        \Pi_{arith} = (\Gamma, H = \{h_1,h_2\}, I , D_H, D_I, G_C, 0,1,i_1,h_0),
    \end{equation*}

    where,
    \begin{itemize}
        \item $I = \{i_1,\dots,i_{3n+3r}\};$
        \item $D_H = (H\cup \{h_0\}, E_H = \{(h_1,h_2),(h_1,h_0),(h_2,h_1)\}, w_H),$ where\\
        $w_H(h_1,h_0) = w_H(h_2,h_1) = 1,$ and $w_H(h_1,h_2) = 2$;
        \item $D_I = (I, E_I, w_I)$, where $\{(i_k,i_{k+1})\ |\ k\in \{1,\dots, 3n+3r-1\}\}\cup \{(i_{3n},i_1)\},$ and $w_I (i_k,i_{k'}) = 1$ for each $(i_k,i_{k'})\in E_I$;
        \item $G_C = (E_H \cup I, E_C),$ where $\{\{i_j,f(j)\}\ |\ j \in\{1,\dots, 3n+3r\}\},$ being
        $$f(j) = \left\{\begin{array}{ll}
            (h_1,h_0), & j \equiv 0 \mod 3, \\
            (h_1,h_2), & j \equiv 1 \mod 3, \\
            (h_2,h_1), & j \equiv 2 \mod 3.
        \end{array}\right.$$
    \end{itemize}  

    \begin{figure}[ht]
        \centering
        \includegraphics[width = 0.6\linewidth]{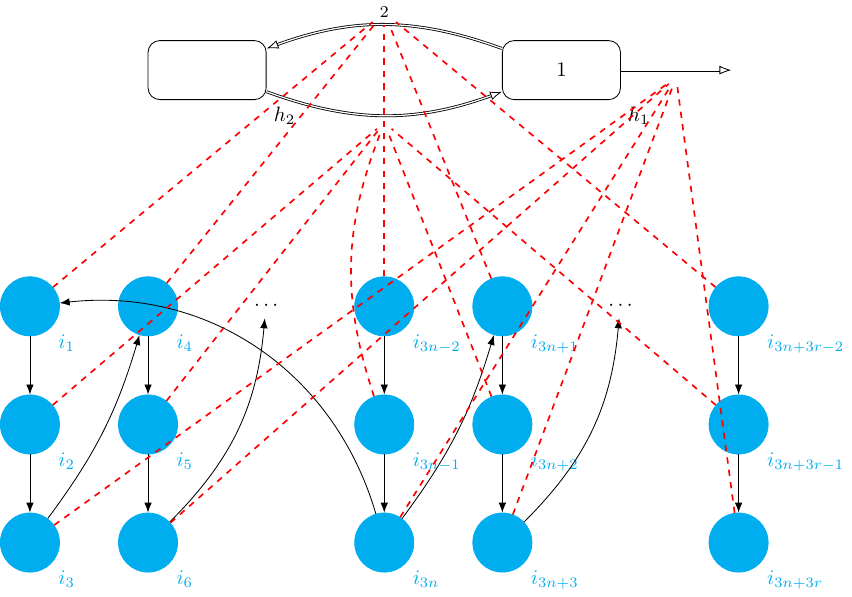}
        \caption{Virus machine $\Pi_{arith}$.}
        \label{fig:VMarith}
    \end{figure}
\end{lemma}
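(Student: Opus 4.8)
The statement asserts a single inclusion (membership of an infinite set in a parametrised family), so my plan is to prove two things about the explicitly defined machine $\Pi_{arith}$: first, that its set of generated numbers is exactly $\{n\cdot i + r \mid i \geq 1\}$, and second, that each of the seven parameters of the family $\mathrm{NVM}_{F}(h_2,i_{3(n+r)},nvh_2,wc_2,outd_2,\alpha^{\hbar}_{2},\alpha^{\iota}_{*})$ can be read off directly from the construction. I would split the generation claim into soundness (every halting computation outputs a value of the form $n i + r$) and completeness (for each $i\geq 1$ there is a halting computation with output $n i + r$), and drive both directions with a single invariant describing the configuration each time control returns to the initial instruction $i_1$ along the back edge $(i_{3n},i_1)$.

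The first step is to isolate the behaviour of one block of three consecutive instructions $i_{3m+1},i_{3m+2},i_{3m+3}$, using the channel assignment $f$ and the host weights. Started from the shuttle configuration $h_1=0,\ h_2=1$, one block realises, in effect, a move--double--output cycle on the single virus: the virus is routed so that the weight-$2$ channel momentarily creates a second copy (so one host transiently holds exactly $2$ viruses, which is where the bound $nvh_2$ comes from), one copy is emitted to the environment, and the host contents are restored to $h_1=0,\ h_2=1$. The decisive property is that the block is \emph{self-sustaining}: the weight-$2$ channel replenishes precisely the virus consumed by the output channel, so the shuttle survives every block and the per-host count never exceeds $2$.

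With this block analysis in hand, the second step is the induction. I would establish the invariant
$$\varphi(k) \equiv \C_{3nk} = (0,1,i_1,n\cdot k),$$
for every $k\geq 0$ reachable by looping: the base case is $\C_0=(0,1,i_1,0)$, and the inductive step applies the block analysis $n$ times (an inner induction over the $n$ blocks of a single loop pass), each pass consuming $3n$ transition steps and adding $n$ to the environment before the back edge returns control to $i_1$. For completeness I then note that the only nondeterminism in $\Pi_{arith}$ is the choice at $i_{3n}$ between looping along $(i_{3n},i_1)$ and leaving along $(i_{3n},i_{3n+1})$: leaving after the $i$-th pass gives environment value $n\cdot i$, after which the $r$ blocks of the tail $i_{3n+1},\dots,i_{3(n+r)}$ contribute $r$ more and the computation halts at $i_{3(n+r)}$ with output $n i + r$. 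Soundness follows from the same description, since every halting computation leaves after some $i\geq 1$ passes (a computation that never leaves never halts and generates nothing). Finally I would verify the parameters by inspection: $2$ hosts; $3(n+r)$ instructions; at most $2$ viruses per host; maximal channel weight $2$; maximal host out-degree $2$, attained only at $h_1$; a single host-graph cycle $h_1\leftrightarrow h_2$ of length $2$; an instruction-graph loop of length $3n$, hence unbounded; and $\beta=F$, since the output channel is shared by several instructions.

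The main obstacle I anticipate is the block lemma together with the phase bookkeeping at the loop boundary. I must check that the doubling and the output are scheduled so that the shuttle returns to its starting host at the end of each pass; otherwise consecutive passes would desynchronise, either stalling the output (so only finitely many numbers are produced) or overflowing the bound $nvh_2$. I must equally check that the back edge sits at the output instruction $i_{3n}$ in exactly the right phase, so that exiting after $i$ passes yields $n\cdot i$ and not $n(i-1)$ or an off-by-$n$ value, before the tail adds its $r$. Once the one-pass behaviour and this offset are pinned down, the induction and the parameter count are routine.
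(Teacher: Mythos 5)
Your proof architecture is the same as the paper's: an invariant describing the configuration at each return of control to $i_1$, the nondeterministic choice at $i_{3n}$ between looping and exiting, a tail of $r$ blocks contributing $r$ further viruses, and a closing soundness remark. In two respects you are actually more careful than the paper: your step index $\C_{3nk}$ is the correct one (the paper writes $C_{3k}$, which miscounts a pass of $n$ blocks as $3$ steps), and your explicit verification of the seven parameters of the family is something the paper omits entirely.

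However, the step you yourself single out as the crux --- the ``self-sustaining block'' --- is false for the machine as defined in the statement, and you assert it rather than check it. Trace one block literally: $f(j)=(h_1,h_2)$ with weight $2$ for $j\equiv 1 \bmod 3$, $f(j)=(h_2,h_1)$ with weight $1$ for $j\equiv 2\bmod 3$, $f(j)=(h_1,h_0)$ for $j\equiv 0\bmod 3$, initial contents $(n_1,n_2)=(0,1)$. Starting a block at $(0,1)$: the first instruction opens $(h_1,h_2)$ but $h_1$ is empty, so the weight-$2$ channel never fires; the second moves the single virus to $h_1$, giving $(1,0)$; the third emits it from $h_1$ to the environment, giving $(0,0)$. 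The shuttle is not restored to $(0,1)$ --- it is destroyed, every later instruction is inert, and the printed machine generates $\{1\}$, not $\{n\cdot i+r \mid i\geq 1\}$. The move--double--output cycle you describe requires the environment channel to leave from $h_2$, i.e.\ a channel $(h_2,h_0)$ as in the machine $\Pi_{Nat}$ of Proposition~\ref{prop:2*2}; no such channel exists in the stated $E_H$. To be fair, the defect originates in the statement: the paper's own proof uses the invariant $(1,0,i_1,k\cdot n)$, which equally contradicts the printed initial values $(0,1)$ and the printed $f$, so both you and the paper are really arguing about a tacitly corrected machine. But as a blind proof of the lemma as stated, your proposal glosses over precisely the one-block computation you flagged as decisive; carried out on the machine actually given, that computation refutes your block lemma, so the induction cannot start until the construction itself is repaired.
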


\begin{proof}
    Suppose that the number generated is $m\cdot n+r$ with $m\geq 1$, then the following invariant holds:

    \begin{equation*}
        \varphi(k) \equiv C_{3k} = (1,0,i_1,k\cdot n),\ \text{ for each } 0\leq k < m 
    \end{equation*}

    The idea of this invariant is taking the non-deterministic decision at instruction $i_{3n}$, that is, going back to instruction $i_1$. In particular, $\varphi(m-1)$ is true. From this, a instruction $i_{3n}$ is reached, and we take the decision to go to instruction $i_{3n+1}$ with the following configuration $C_{3m} = (1,0,i_{3n+1},m\cdot n)$. After that, it is straightforward to reach the halting configuration.

    \begin{equation*}
        C_{3m+3r} = (1,0,\#,m\cdot n + r)
    \end{equation*}
    
    Thus, there exists a computation that generates the number $m\cdot n +r$. 

    Regarding the other inclusion, that is, for each halting computation of the machine, the number generated is in the set proposed. It can be seen in a simple glance that for any generated number we will have the previous invariant formula, and after that, we send $r$ viruses to the environment. Thus, the other inclusion is proved.
\end{proof}

\begin{lemma}[Union closure]\label{lem:union}
    Let $Q_1,Q_2,\dots,Q_m\subseteq \N$ with $m>0$ arithmetical progressions, then $\cup_{j=1}^{m}Q_j \in \mathrm{NVM}_{F}(h_p,i_{*}, nvh_r,wc_s,outd_t,\alpha^{\hbar}_{u},\alpha^{\iota}_{*})$ for each $p,r,s,t,u\geq2$.
\end{lemma}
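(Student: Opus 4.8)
The plan is to realise the union by a single nondeterministic virus machine that first \emph{guesses} which progression to produce and then faithfully runs the corresponding machine supplied by Lemma~\ref{lem:arit}. The crucial observation, which makes the construction possible with only two hosts, is that for every choice of parameters $(n,r)$ the machine $\Pi_{arith}$ built in Lemma~\ref{lem:arit} uses the \emph{same} two hosts $h_1,h_2$, the \emph{same} host graph with channels $(h_1,h_2),(h_1,h_0),(h_2,h_1)$, and the \emph{same} initial contents $(n_1,n_2)=(0,1)$; only the instruction graph (a chain with one loop whose length encodes $n$) and the channel–instruction attachments depend on $(n,r)$. Hence several such machines can be superimposed on one common host structure.

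Writing $Q_j=\{n_j\cdot i+r_j\mid i\ge 1\}$ for $j=1,\dots,m$, I would build a combined machine $\Pi_\cup$ on the two hosts $h_1,h_2$ with initial configuration $(0,1)$, whose instruction set is the disjoint union of $m$ branches --- one relabelled copy of the instruction graph of $\Pi_{arith}$ for each pair $(n_j,r_j)$ --- together with one fresh initial instruction $i_0$. The three channels $(h_1,h_2),(h_1,h_0),(h_2,h_1)$ are shared by all branches: each branch attaches its instructions to these channels exactly as prescribed by the map $f$ of Lemma~\ref{lem:arit}. This sharing is precisely where the hypothesis $\beta=F$ is used, since a single channel must now be attached to many instructions; under $\beta=T$ one would be forced to introduce fresh hosts for each branch and could not keep $p=2$.

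The dispatcher $i_0$ is attached to a channel leaving the initially empty host $h_1$ (for instance $(h_1,h_0)$) and is given $m$ outgoing instruction-edges, all of weight $1$, pointing to the first instructions of the $m$ branches. Because $h_1$ is empty when $i_0$ fires, no virus is transmitted and the least-weight path is followed; as all $m$ edges are tied, this yields a clean $m$-way nondeterministic choice that leaves the configuration $(0,1)$ untouched. Consequently each branch is entered in exactly the initial configuration its $\Pi_{arith}$ expects, so a computation that selects branch $j$ reproduces that machine's computation step for step and halts outputting some element of $Q_j$; conversely every element of every $Q_j$ arises from the corresponding guess. Therefore $\Pi_\cup$ generates $\bigcup_{j=1}^m Q_j$.

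Finally I would read off the parameters of $\Pi_\cup$: two hosts ($p=2$); unbounded instructions ($i_*$); at most two viruses in any host at any instant, inherited from each $\Pi_{arith}$ ($r=2$); maximum channel weight $2$ ($s=2$); host out-degrees $2$ and $1$, so $t=2$; the only host-graph cycle is $(h_1,h_2),(h_2,h_1)$ of length $2$ ($u=2$); and instruction loops of sizes $3n_j$, hence unbounded ($\alpha^{\iota}_*$). Since these attain the minimal admissible values and enlarging any bound only relaxes the restriction, $\bigcup_{j=1}^m Q_j$ lies in the stated family for every $p,r,s,t,u\ge 2$. The only delicate point, and the part I would write out most carefully, is the behaviour of the dispatcher: one must verify that routing it through an empty host genuinely produces $m$-way nondeterminism without disturbing the counts, and that no unintended instruction edge links two distinct branches, so that once a branch is chosen the computation is exactly that of the corresponding machine from Lemma~\ref{lem:arit}.
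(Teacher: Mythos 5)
Your proposal is correct and takes essentially the same approach as the paper: both rest on the observation that every machine supplied by Lemma~\ref{lem:arit} has the identical two-host graph, channels, and initial contents, so the union can be realized by superimposing the instruction graphs on this shared host structure (which is exactly where $\beta=F$ is needed) and selecting the branch nondeterministically. The only difference is organizational --- the paper merges machines pairwise by induction on $m$ (Figures~\ref{fig:union1} and~\ref{fig:union2}), whereas you build a single $m$-way dispatcher in one shot; your explicit handling of the dispatcher (routing it through a channel whose source host is empty so that the tied least-weight edges give a clean $m$-way choice without disturbing the virus counts) spells out a detail the paper delegates to its figures.
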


\begin{proof}
    The main idea of the proof is that we can also keep the host graph in the union of each subset. The induction technique will be used for this purpose.

    \boxed{Q_1\cup Q_2} As both $Q_1,Q_2$ are arithmetic progression, we can apply the Lemma~\ref{lem:arit}, we will denote that $Q_j$ is generated by $\Pi_j$ for each $j\in\{1,2\}$, this sub-index notation is extended to the heterogeneous networks of each machine. Note that the host graphs and the initial amount of viruses of each virus machine remains equal. The construction is visually explained in Figure \ref{fig:union1}. From which it can be seen in a simple glance that it generates $Q_1\cup Q_2$, proving that $Q_1\cup Q_2$ is in the families of the statement. 

    \boxed{\cup_{j=1}^{m-1}Q_j \cup Q_m} For the inductive step, let us suppose that a virus machine $\Pi_{m-1}$ generates the union with host graph mentioned in the base case. Applying again Lemma \ref{lem:arit}, let $\Pi_m$ the virus machine that generates the set $Q_m$. The construction of the virus machine that generates the union of both sets will be analogous to the previous one, visually explained in Figure \ref{fig:union2}, thus it has been proved that $\cup_{j=1}^m Q_j$ is in the families of the statement.
    
    \begin{figure}[ht]
        \centering
        \includegraphics[width = 0.35\linewidth]{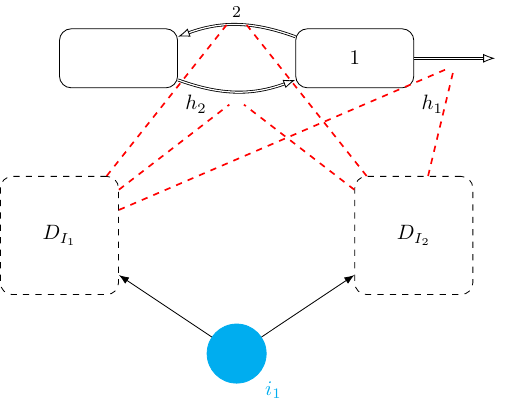}
        \caption{Visual idea of the union for generating $U_1\cup U_2$.}
        \label{fig:union1}
    \end{figure}
    
    \begin{figure}[ht]
        \centering
        \includegraphics[width = 0.35\linewidth]{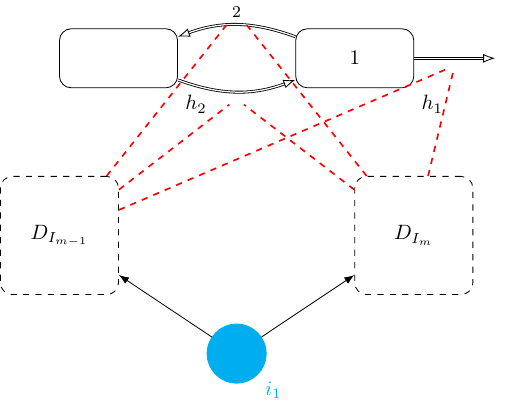}
        \caption{Visual idea of the union for generating $\cup_{j=1}^{m-1}Q_j\cup Q_m$.}
        \label{fig:union2}
    \end{figure}
\end{proof}

\subsection{Universality}

Lastly, it is interesting to note that new frontiers of universality can be obtained with these novel ingredients. 

\begin{theorem}\label{teo:NRE}
    $\mathrm{NRE} = \mathrm{NVM}_{F}(h_*,i_*,nvh_*,wc_s,out_*,\alpha^{\hbar}_*,\alpha^{\iota}_*)$ for each $s\geq 2$.
\end{theorem}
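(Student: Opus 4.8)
The statement asserts an equality of families, so the plan is to prove two inclusions. The inclusion $\mathrm{NVM}_{F}(h_*,i_*,nvh_*,wc_s,out_*,\alpha^{\hbar}_*,\alpha^{\iota}_*) \subseteq \mathrm{NRE}$ is immediate: a virus machine is an effective device whose every computation can be simulated step by step by a Turing machine, so the set of numbers it generates is recursively enumerable, and this holds for every $s$. For the reverse inclusion it suffices to treat the case $s=2$, since a machine all of whose channel weights are at most $2$ is in particular a machine whose weights are at most $s$ for any $s\geq 2$, while all remaining ingredients ($h$, $i$, $nvh$, $out$, $\alpha^{\hbar}$, $\alpha^{\iota}$) are unbounded and hence impose no constraint. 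I would also note that $\beta=F$ is the unrestricted option (a channel may be shared by several instructions), so it likewise imposes no extra condition.

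The core task is therefore $\mathrm{NRE} \subseteq \mathrm{NVM}_{F}(h_*,i_*,nvh_*,wc_2,out_*,\alpha^{\hbar}_*,\alpha^{\iota}_*)$. The plan is to revisit the Turing-completeness construction of~\cite{VM}, which simulates register (counter) machines, and to verify that the whole simulation can be carried out with channel weights bounded by $2$. I would encode each counter by a dedicated host whose number of viruses equals the counter value, route the register-machine program through the instruction graph, and realise the two primitive instructions as follows: a decrement-with-zero-test by a weight-$1$ channel leaving the counter-host, using the path-selection semantics (highest-weight path when the host is nonempty, least-weight path when it is empty) to branch on the zero test; and an increment by a weight-$2$ channel, since replication is the only mechanism available to enlarge the total virus population. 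This last point is exactly why $wc_1$ is too weak and $wc_2$ suffices: with all weights equal to $1$ the total number of viruses is conserved along every computation, bounding the reachable configurations, whereas a single weight-$2$ channel lets the population grow without bound, which is what unbounded counters require.

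The main obstacle is the weights actually used by the base construction: if~\cite{VM} employs some channel of weight $w>2$, I would replace that channel by a gadget built only from weight-$\leq 2$ channels that, on each activation, consumes one virus from the source host and deposits exactly $w$ viruses in the target host. The plan is to obtain such a multiplier-by-$w$ from the binary expansion of $w$ via Horner's scheme: an accumulator host is repeatedly doubled by a small loop that transfers all of its viruses to a temporary host through a weight-$2$ channel and then transfers them back through a weight-$1$ channel, inserting a single extra virus whenever the corresponding binary digit of $w$ equals $1$. The delicate points I expect to dominate the verification are (i) the ``transfer all viruses'' loops, whose termination must be detected through the empty-host path selection rather than by counting; (ii) the requirement that every auxiliary host be returned to its initial (empty) contents at the end of the gadget, so that it behaves identically on each of the unboundedly many times it may be re-entered; and (iii) supplying the isolated viruses needed for the ``$+1$'' digits without pre-loading an unbounded reserve. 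Once the gadget is shown correct, it adds only finitely many hosts, instructions and bounded-size loops per replaced channel, leaving every unbounded ingredient of the target family untouched, and the inclusion follows.
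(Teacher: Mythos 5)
Your proposal is correct and follows essentially the same route as the paper: the easy direction is the standard Turing-simulation argument, and the hard direction reduces to checking that the register-machine simulation establishing universality in \cite{computing-VM} already uses only channel weights bounded by $2$. The paper's proof is exactly that inspection, so your Horner-scheme multiplier gadget for hypothetical weight-$w>2$ channels is sound insurance but turns out to be unnecessary.
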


\begin{proof}
    For the proof, we refer to~\cite{computing-VM}, where the Turing completeness was proven by simulating register machines in a modular way. In that simulation, one can see on a simple glance that the weight of the channels are bounded by $2$, thus the theorem is proved.
\end{proof}

\subsection{Discussion of new ingredients}

Here, table~\ref{tab:2} summarises the previous results in addition to the new results obtained in this work with the proposed new ingredients. First, the singleton sets were characterised in the previous section with the old ingredients; note that including the new ingredients is straightforward. With finite sets, it is interesting to note that the new characterisations were previously inclusions, just fixing the host graph and/or the instruction graph to a tree. Showing the strict inclusion with the loops of size two in the host graph and size three in the instruciton graph. These loop restrictions are the same for characterising the semilinear sets, but unbounding the number of hosts instead. Note that despite the $\beta$ ingredient seems to be very restrictive, we can characterise this family of sets. The most interesting result of this section arises when we unbound the size of the loops in the instruction graph and $\beta = F$, which decreases the number of necessary hosts to two. Last but not least, the universality result from~\cite{computing-VM} has been revised with the new ingredients, showing that there is a huge frontier in computational power, from finite sets with weight $1$, to $\mathrm{NRE}$ with weight $2$.

\begin{table*}[ht]
\centering
\begin{tabular}{|l|c|l|l|l|l|c|l|l|l|}
\hline
Family of sets               & Relation & $h$ & $i$ & $nv$ & $wc$ & $outd$ & $\alpha^{\hbar}$ & $\alpha^{\iota}$ & $\beta$ \\\hline
\multirow{1}{*}{Singleton}   & $=$ (Theorem~\ref{teo:singlesets}) &  1  &  1  &  1   & 1 &  1 & 0 & 0 & T \\ \hline 
\multirow{4}{*}{Finite sets} & $=$ (Theorem~\ref{teo:NFINhost}) & 1   & *   &  *   & 1 &  1 & 0 & 0 & F \\ 
                             & $=$ (Theorem~\ref{teo:NFINnumhost}) & *   & *   &  1   & * &  1 & 0 & 0 & T \\ 
                             & $=$ (Theorem~\ref{teo:InstGraph}) & 2   & *   &  2   & 2    & 2   & 2 & 0 & F \\ 
                             &$\subsetneq$ (Lemma~\ref{lem:2*2})   & 2 & *   &  2   & 2  &  2  & 2 & 3 & F \\ \hline
\multirow{2}{*}{$\mathrm{SLIN}$}      &   $=$ (Theorem~\ref{teo:SLIN**2}) & *   & *   &  2   & *    & 2     & 2 & 3 & T \\ 
                             &   $=$  (Theorem~\ref{teo:SLIN2*2}) & 2   & *   &  2   & 2    & 2    & 2 & * & F \\ \hline
$\mathrm{NRE}$                        &   $=$ (Theorem~\ref{teo:NRE}) & *   & *   &  *   & 2    &   *       & * & * & F \\ \hline
\end{tabular}
\caption{Discussion of minimum resources needed for generating some family of sets of natural numbers.}
\label{tab:2}
\end{table*}

\section{Conclusions}\label{sec:conc}

Some directions for future work include the extension to parallel VMs e.g. \cite{vm_workflow2024,Antonio2024a}.
In such VMs more than one instruction can be active at each computation step, so some of the restrictions in the present work may not apply.
Another natural extension of the present work is to apply them to VMs for accepting inputs or as {\it transducers}, that is, computing functions using both input and output~\cite{GCRVM}.

The results in normal forms from the present work can be\-tter inform the design of applications or implementations of VMs.
For instance, perhaps with ideas from \cite{vm_matriz_bwmc2024,vm_matriz2024}, sequels of the simulator from \cite{vm_sim}, or applications of VMs such as edge detection~\cite{Antonio2024b}, or cryptography~\cite{CryptoVM} can be improved by applying simplifications based on normal forms.
Similarly, knowing which types of instruction, host, or channel graphs are used, the {\it reachability} of one configuration from another and other properties, can be decidable: in this case, searching for efficient algorithms and applications to formal verification are open problems.
The results of the classes below $NRE$ help inform the computational complexity of problem solving with a model.
In the case of VMs, a future direction is to investigate deeper into ``sub-$NRE$'' classes: in this way a better view of efficiency or lack of it, can be given for VMs.
Lastly, new and optimal lower bounds are expected to improve the results of the present work, summarised in Table \ref{tab:2}.
New normal forms can perhaps include new ingredients or semantics not previously considered.
For instance, a deeper focus on deterministic versus nondeterministic computations, as well classes and trade-offs for time and space.

\bibliographystyle{fundam}
\bibliography{ref}

\end{document}